\documentclass[journal]{IEEEtran}

\usepackage{amsmath,amssymb,amsthm,mathtools}
\usepackage{bm}
\usepackage{graphicx}
\usepackage{booktabs}
\usepackage{enumitem}
\usepackage{algorithm}
\usepackage{algpseudocode}
\usepackage{tikz}
\usetikzlibrary{arrows.meta, positioning, calc, shapes.geometric, fit, backgrounds}
\usepackage{hyperref}
\usepackage{cleveref}

\newtheorem{axiom}{Axiom}

\newtheorem{theorem}{Theorem}

\newtheorem{proposition}[theorem]{Proposition}
\newtheorem{corollary}[theorem]{Corollary}
\theoremstyle{definition}
\newtheorem{definition}{Definition}
\theoremstyle{remark}
\newtheorem{remark}{Remark}

\newcommand{\R}{\mathbb{R}}
\newcommand{\N}{\mathbb{N}}
\newcommand{\E}{\mathbb{E}}
\newcommand{\PP}{\mathbb{P}}
\newcommand{\simplex}{\Delta}
\newcommand{\Ssp}{\mathcal{S}}
\newcommand{\Asp}{\mathcal{A}}
\newcommand{\Osp}{\mathcal{O}}
\newcommand{\Hsp}{\mathcal{H}}
\newcommand{\Xsp}{\mathcal{X}}
\newcommand{\BSE}{\textsc{BSE}}
\newcommand{\LLM}{\textsc{LLM}}
\DeclareMathOperator*{\argmax}{arg\,max}

\begin{document}

\title{Belief-State Engine: Augmenting LLMs for Principled Planning Under Partial Observability}

\author{%
  Arnab~Chattopadhayay
  \thanks{A.~Chattopadhayay is an Independent Researcher
          (UCL~Alumni), Bangalore, India
          (e-mail: \href{mailto:arnab.chattopadhayay@uclmail.net}{arnab.chattopadhayay@uclmail.net}).}%
  ~and~Debdipta~Halder%
  \thanks{D.~Halder is an Independent Researcher
          (IIT-Kharagpur Alumni), Bangalore, India
          (e-mail: \href{mailto:haldev50@gmail.com}{haldev50@gmail.com}).}%
  \thanks{Manuscript prepared April 2026. Code, environment specifications,
          prompt templates, and paired-seed logs accompanying this
          preprint are available at \url{https://github.com/debdipta-h/bse-llm}.}
}

\markboth{Preprint, April~2026}%
         {Chattopadhayay \& Halder: Belief-State Engine for LLM Planning Under Partial Observability}

\maketitle

\begin{abstract}
Large language model agents produce fluent action sequences across
a wide range of tasks, yet they fail in characteristic ways once
the environment becomes partially observable. Ambiguous feedback
pushes them into premature commitments. A single informative
observation can collapse their uncertainty onto the wrong
hypothesis. Policies drift as the history grows. We trace these
symptoms to a common structural cause. An LLM agent, as commonly
deployed, is a history-conditioned policy with no explicit belief
over hidden state.

We propose an architectural fix. The Belief-State Engine (BSE)
is an inference module placed outside the LLM. It maintains a
Bayesian posterior over the latent states of a given POMDP model,
and at each decision step it exposes only that posterior to the
LLM. The raw action-observation log is not shown. We set out a
minimal four-axiom specification of what a belief-consistent
internal state must satisfy, and prove that the LLM paired with
the BSE is a sound Markov policy on the belief MDP induced by
the underlying POMDP. It therefore inherits the Bellman
optimality guarantees of classical POMDP theory, provided the
LLM is never exposed to the raw history.

We evaluate the architecture on the Tiger POMDP and a red-team
attack-graph task, against six baselines: a reactive LLM,
Chain-of-Thought, ReAct, a natural-language belief tracker,
QMDP, and POMCP. Across both domains, the BSE-augmented agent
improves task return, belief calibration, and decision
consistency. Ten targeted ablations isolate the contribution of
each architectural choice, and a replication on an open-weights
backbone confirms that the effect is not specific to any one
model. Code, environment specifications, prompt templates, and
seed logs accompany this preprint.
\end{abstract}

\begin{IEEEkeywords}
Large language model agents, belief state, partial observability,
POMDP, planning under uncertainty, Bayesian filtering,
tool-augmented LLMs.
\end{IEEEkeywords}

\section{Introduction}
\label{sec:intro}

\IEEEPARstart{L}{arge} language models now sit at the centre of
a growing class of autonomous agents. Systems such as
ReAct~\cite{yao2022react}, Reflexion~\cite{shinn2023reflexion},
Tree-of-Thoughts~\cite{yao2023tree}, Voyager~\cite{wang2023voyager},
and SWE-agent~\cite{yang2024swe} wrap an LLM inside a loop that
turns text-level reasoning into multi-step action sequences, over
settings that range from embodied simulation to software repair
and web navigation. These systems share a common failure
profile~\cite{valmeekam2023planning, liu2024agentbench,
xi2023rise}. They work well when the next action can be decided
from the latest observation. They stumble once the environment
becomes partially observable. Feedback that is delayed, or
ambiguous, or actively deceptive, tends to expose a class of
failures that Chain-of-Thought prompting does not resolve.

In our view the issue is architectural, not a question of
reasoning depth. Keeping a calibrated belief over hidden states,
and then picking actions with respect to that belief, is what
Bayesian filtering gives you. It is not what Chain-of-Thought
prompting, a longer context window, or a reflection loop confers
on an LLM. These mechanisms accumulate text. They do not
accumulate probability mass. Two histories that would yield the
same Bayesian posterior can elicit very different action
distributions from an LLM whenever their surface texts differ,
and that already breaks the most basic consistency requirement of
a belief-measurable policy. The behaviour this produces has been
reported many times: premature commitment when evidence is still
ambiguous, over-confident collapse onto one hypothesis after a
single informative observation, and policy drift as the context
grows.

Our fix is to stop asking the LLM to act as a planner under
partial observability. We wrap it in a two-module system. An
external inference component, which we call the Belief-State
Engine, holds the epistemic state. The LLM is restricted to
selecting actions conditioned on that state. Concretely, the BSE
runs a Bayesian filter over a POMDP model $(T, Z)$ of the
environment, and at each step it hands the LLM a normalised
belief $b_t \in \simplex(\Ssp)$ as the only decision-time
context. The raw trace of past actions and observations is not
shown to the LLM.

This design has a precise mathematical backing. Under a minimal
four-axiom specification of belief-consistent internal state, the
LLM paired with the BSE is a Markov policy on the belief MDP
induced by the underlying POMDP. It therefore inherits the
Bellman optimality theorems of classical POMDP
theory~\cite{astrom1965, smallwood1973, puterman1994}. The axioms
pin down what qualifies as a belief-consistent internal state.
The theorems lift the architectural choice into a compositionality
guarantee with a well-defined failure mode: if the LLM is shown
the raw history at decision time, Axiom~A4 is violated and the
guarantee falls with it.

\paragraph*{Contributions}
This paper makes four contributions.
\begin{enumerate}[leftmargin=*,itemsep=2pt,topsep=2pt]
\item \emph{Architecture.} We introduce the Belief-State Engine,
an LLM-external and model-agnostic inference module that
maintains a POMDP-grounded belief and interfaces with any
belief-measurable policy, including an LLM-parameterised one
(\cref{sec:architecture}).

\item \emph{Theory.} We set out a minimal four-axiom
characterisation of belief-consistent internal state and prove six
theorems: existence and minimality of the canonical posterior,
uniqueness of the Bayes update, value equivalence with the belief
MDP, ambiguity preservation under bisimulation, and soundness of
the LLM-BSE composition (\cref{sec:axioms}; full proofs in
Appendix~A).

\item \emph{Empirical study.} We evaluate the BSE on two
environments, the Tiger POMDP and a red-team attack-graph task,
against six baselines (reactive, Chain-of-Thought, ReAct,
natural-language belief tracker, QMDP, POMCP). Four metric
families are reported: task return, belief calibration, decision
consistency, and compute cost. Ten architectural ablations and an
open-weights robustness replication accompany the main results
(\cref{sec:methodology,sec:results}).

\item \emph{Artefact release.} Code, prompt templates, environment
specifications, reward matrices, and paired-seed logs are
published with this preprint.
\end{enumerate}

\paragraph*{Paper organisation}
\Cref{sec:related} reviews POMDP theory, LLM-agent failure modes,
and related hybrid work. \Cref{sec:preliminaries} fixes notation.
\Cref{sec:axioms} sets out the axiomatic foundations.
\Cref{sec:architecture} describes the BSE. \Cref{sec:methodology}
details the experimental methodology. \Cref{sec:results} reports
results. \Cref{sec:limitations} discusses limitations and
approximate belief representations. \Cref{sec:conclusion}
concludes.

\section{Background and Related Work}
\label{sec:related}

\subsection{POMDPs and Belief-State Theory}
\label{sec:related-pomdp}

Partially observable Markov decision processes, or POMDPs,
formalise sequential decision-making when the state is
latent~\cite{astrom1965, smallwood1973, kaelbling1998planning}. A
POMDP $M = (\Ssp,\Asp,\Osp,T,Z,R,\gamma,\mu_0)$ extends the
standard MDP by a finite observation space $\Osp$ and an
observation kernel $Z(o\mid s,a)$. Its optimal policy can be
written as a function of the belief state $b_t \in
\simplex(\Ssp)$, the posterior over latent states given the
interaction history~\cite{sondik1971, smallwood1973}. The belief
is a sufficient statistic for optimal decision-making~\cite{striebel1965,
puterman1994}. Updating it recursively via the two-step Bayes
filter, prediction followed by observation-conditioned correction,
turns the POMDP into a fully observable Markov decision process on
$\simplex(\Ssp)$, the belief MDP.

Exact solution of belief MDPs is PSPACE-hard in the finite-horizon
case, and undecidable in the infinite-horizon
case~\cite{papadimitriou1987}. Several decades of research have
produced approximate solvers that scale to useful state sizes.
Point-based value iteration~\cite{pineau2003} and
SARSOP~\cite{kurniawati2008} restrict value computation to a
representative subset of reachable beliefs. QMDP~\cite{littman1995}
computes a fast heuristic by treating the environment as fully
observable after the current step. POMCP~\cite{silver2010} runs
Monte Carlo tree search in belief space via sampled rollouts. All
four methods assume access to the model $(T, Z)$, or at least to
a simulator of it. Our empirical setup makes the same assumption.
Learning the belief-state model from data is a separate research
thread that we discuss in \cref{sec:limitations}.

\subsection{LLM Agents and Their Failure Modes}
\label{sec:related-llm-agents}

Starting with ReAct~\cite{yao2022react} and continuing through
Reflexion~\cite{shinn2023reflexion},
Tree-of-Thoughts~\cite{yao2023tree}, Voyager~\cite{wang2023voyager},
SWE-agent~\cite{yang2024swe}, and the Cognitive Architectures for
Language Agents survey~\cite{sumers2024cognitive}, the working
template for an LLM agent has been to iterate an LLM over a
growing text log of past actions and observations, interleaved
with reasoning traces, tool calls, and retrieval. The template
works well when the next action can be decided from the current
observation and the cost of long-context reasoning is affordable.

Systematic evaluations have mapped the limits of that template.
Valmeekam and co-authors~\cite{valmeekam2023planning,
valmeekam2024planbench} show that LLMs do poorly on classical
planning problems even with Chain-of-Thought. Liu et
al.~\cite{liu2024agentbench} report steep performance drops on
agent benchmarks once observations are noisy or delayed. The
standard interpretation in the literature is that the LLM lacks a
world model, or a calibrated uncertainty estimate. Our own
position is more specific. The missing component is a sufficient
statistic of history. No amount of reasoning over raw text recovers
what Bayesian conditioning gives automatically.

\subsection{LLM-POMDP Hybrids and Belief Tracking}
\label{sec:related-hybrids}

A smaller but growing line of work couples LLMs to some form of
explicit state representation. Reasoning via
Planning~\cite{hao2023rap} uses an LLM as a world model inside
MCTS-style search, but the resulting belief is implicit in the
tree. Du et al.~\cite{du2023guiding} have LLMs track symbolic task
state for goal-conditioned learning. Xie et
al.~\cite{xie2022explanation} and related work interpret
in-context learning itself as implicit Bayesian inference, though
the posterior is never surfaced. In the agent setting, a related
strand asks an LLM to act as its own belief-state approximator,
typically by narrating its uncertainty in natural language rather
than maintaining an explicit probability distribution.

Three points distinguish the BSE from this line of work. First,
the belief is external and explicit. It lives as a probability
distribution over a finite latent state space, rather than inside
the LLM's activations or in a free-text paragraph. Second, the
architecture is compositional. Any LLM can be dropped into the
policy slot, and any belief-measurable policy is sound by
Theorem~\ref{thm:llm-soundness}. Third, the belief is auditable.
At every decision point the full posterior $b_t$ is available to
the operator, which supports debugging, monitoring, and guarantee
checks in a way that implicit or free-text representations cannot
match. We include a natural-language belief tracker as a direct
baseline in our experiments (\cref{sec:methodology}) precisely to
test whether the benefit we report comes from \emph{any} epistemic
state, or specifically from a \emph{probabilistic} one.

\subsection{Uncertainty and Calibration in LLMs}
\label{sec:related-calibration}

Outside the agent setting, a parallel literature asks whether LLMs
are calibrated. Kadavath et al.~\cite{kadavath2022language} find
that large models are approximately calibrated on factual
question answering. Kuhn et al.~\cite{kuhn2023semantic} extend
this to semantic uncertainty in free-form generation.
Distribution-free coverage guarantees on single-shot LLM outputs
are available through conformal
prediction~\cite{angelopoulos2023conformal}. These results all
concern an LLM's self-reported confidence on individual questions.
They do not address the distinct problem of maintaining a
calibrated posterior over a hidden environment state across a
multi-step trajectory. That is the problem the BSE targets.

\subsection{The Gap}
\label{sec:related-gap}

None of the lines reviewed above gives what the BSE gives: an
LLM-external, auditable, POMDP-grounded belief, maintained in
closed form, compositional with any LLM policy backbone, and
backed by a compositionality theorem that specifies the conditions
under which the composition inherits classical POMDP guarantees.
The rest of the paper develops that construction.
\Cref{sec:preliminaries} fixes notation. \Cref{sec:axioms}
supplies the axiomatic and theoretical foundation.
\Cref{sec:architecture} specifies the engine itself.


\section{Problem Setting and Preliminaries}
\label{sec:preliminaries}

We fix notation in this section and then state the central
observation that motivates the rest of the paper: an LLM deployed
as a history-conditioned policy is not a belief-measurable policy,
and that gap is what the Belief-State Engine closes.

\subsection{POMDPs and the Belief MDP}
\label{sec:prelim-pomdp}

Let $M = (\Ssp, \Asp, \Osp, T, Z, R, \gamma, \mu_0)$ be a finite
partially observable Markov decision
process~\cite{astrom1965,smallwood1973,kaelbling1998planning}. Here
$\Ssp$ is a finite set of latent states, $\Asp$ a finite action
set, and $\Osp$ a finite observation set. The transition kernel
$T : \Ssp \times \Asp \to \simplex(\Ssp)$ maps a state-action pair
to a distribution over next states, with
$T(s' \mid s,a) := \PP(S_{t+1}{=}s' \mid S_t{=}s, A_t{=}a)$. The
observation kernel $Z : \Ssp \times \Asp \to \simplex(\Osp)$ emits
$Z(o \mid s',a) := \PP(O_{t+1}{=}o \mid S_{t+1}{=}s', A_t{=}a)$
after the transition. The reward function
$R : \Ssp \times \Asp \to \R$ is bounded, the discount factor
$\gamma \in [0,1)$, and $\mu_0 \in \simplex(\Ssp)$ is the initial
state distribution. The agent does not observe $S_t$ at any step.

At time $t$ the agent has seen an interaction history
\begin{equation}
  h_t
  = (a_0, o_1, a_1, o_2, \dots, a_{t-1}, o_t)
  \in \Hsp_t,
  \label{eq:history}
\end{equation}
with $\Hsp_t := (\Asp \times \Osp)^t$ and $h_0 := \varnothing$.
Let $\Hsp := \bigcup_{t \geq 0} \Hsp_t$ denote the set of all
finite histories. A history-conditioned policy is a map
$\pi : \Hsp \to \simplex(\Asp)$. The induced trajectory
distribution factorises in the standard way:
\begin{equation}
\begin{split}
  \PP_\pi(S_{0:T}, A_{0:T-1}, O_{1:T})
  ={}& \mu_0(S_0) \prod_{t=0}^{T-1}
       \pi(A_t \mid h_t) \\
     & {}\times T(S_{t+1} \mid S_t, A_t)\,
         Z(O_{t+1} \mid S_{t+1}, A_t).
\end{split}
  \label{eq:trajectory}
\end{equation}
The objective is the expected discounted return
$J(\pi) := \E_\pi\!\big[\sum_{t=0}^{\infty} \gamma^{t}
  R(S_t, A_t)\big]$.

The belief at time $t$ is the posterior over the latent state
given the history:
\begin{equation}
  b_t(s) := \PP(S_t{=}s \mid h_t, \mu_0),
  \qquad b_t \in \simplex(\Ssp).
  \label{eq:belief-def}
\end{equation}
The belief admits the recursive Bayes-filter
update~\cite{sondik1971,striebel1965}. Given $b_t$, action $a_t$,
and observation $o_{t+1}$, the posterior at $t{+}1$ is
\begin{align}
  \bar b_{t+1}(s')
  &= \sum_{s \in \Ssp} T(s' \mid s, a_t)\, b_t(s),
  \label{eq:predict}\\
  b_{t+1}(s')
  &= \frac{Z(o_{t+1} \mid s', a_t)\, \bar b_{t+1}(s')}
          {\sum_{s'' \in \Ssp} Z(o_{t+1} \mid s'', a_t)\,
            \bar b_{t+1}(s'')}.
  \label{eq:update}
\end{align}
Equation~\eqref{eq:predict} is the prediction step.
Equation~\eqref{eq:update} is the observation-conditioned
correction. The two steps must be performed in order. Dropping the
prediction step, as the algorithm in the TechRxiv version of this
work inadvertently did, gives an incorrect posterior whenever
$T \neq I$. We return to this point when we specify the engine in
\cref{sec:architecture}.

Once the belief is introduced, the POMDP $M$ is equivalent in value
to a fully observable Markov decision process on
$\simplex(\Ssp)$, known as the belief MDP. Its transition kernel
is
\begin{equation}
  \tau(b' \mid b, a)
  = \sum_{o \in \Osp}
      \PP(o \mid b, a)\,
      \mathbf{1}\!\left[b' = \mathrm{Bayes}(b, a, o)\right],
  \label{eq:belief-mdp}
\end{equation}
where $\PP(o \mid b, a) = \sum_{s', s} Z(o \mid s', a)\,
T(s' \mid s, a)\, b(s)$ is the marginal observation probability and
$\mathrm{Bayes}(b, a, o)$ is the posterior given by
\eqref{eq:predict}--\eqref{eq:update}. The belief-MDP reward is
$r(b, a) := \sum_s b(s)\, R(s, a)$. The optimal value function
$V^*$ on $\simplex(\Ssp)$ satisfies the Bellman
equation~\cite{smallwood1973,puterman1994}
\begin{equation}
  V^*(b)
  = \max_{a \in \Asp}
    \Big[
      r(b, a) + \gamma \sum_{o \in \Osp}
      \PP(o \mid b, a)\,
      V^*\!\left(\mathrm{Bayes}(b, a, o)\right)
    \Big].
  \label{eq:bellman}
\end{equation}
The belief is therefore a sufficient statistic of history for the
purposes of optimal decision-making. Any policy that is a function
of $h_t$ alone, and that attains $J(\pi) = V^*(\mu_0)$, must be
expressible as a function of $b_t$ on the support of trajectories
induced by $\pi$. We use this fact repeatedly in \cref{sec:axioms}.

\subsection{LLMs as History-Conditioned Text Policies}
\label{sec:prelim-llm}

Let $\Xsp$ denote a token alphabet and let $\phi : \Hsp \to
\Xsp^{\!\star}$ be a deterministic serialiser that maps a history
$h_t$ to a finite token string, for example a system prompt
followed by a transcript of past actions and observations. Given a
decoding temperature and a sampling rule, a language model induces
a conditional distribution $q_{\LLM}(y \mid x)$ over token
continuations $y \in \Xsp^{\!\star}$ given any prompt $x \in
\Xsp^{\!\star}$. An action parser $\psi : \Xsp^{\!\star} \to \Asp
\cup \{\bot\}$ extracts an action from the LLM output, mapping
malformed continuations to a designated abstain symbol $\bot$,
which the deployment layer typically resolves by default action or
retry.

Composing these pieces gives a history-conditioned text policy
\begin{equation}
  \pi_{\LLM}(a \mid h_t)
  := \PP_{y \sim q_{\LLM}(\cdot \mid \phi(h_t))}
     \big[\psi(y) = a\big],
  \qquad a \in \Asp.
  \label{eq:pi-llm}
\end{equation}
This is the object that is instantiated, explicitly or implicitly,
by every LLM agent architecture
in~\cite{yao2022react,shinn2023reflexion,yao2023tree,
wang2023voyager,yang2024swe}. Chain-of-Thought prompting,
reflection, and scratchpad memories are all variations on the
serialiser $\phi$ and on the parser $\psi$. They do not, on their
own, change the type of the policy. It remains a map from
histories to action distributions.

The deployment pipeline of equation~\eqref{eq:pi-llm} is subject
to practical constraints that the belief formulation is not.
Context-window truncation replaces $\phi(h_t)$ by a lossy
prefix-or-summary for long $t$. Chat-style APIs introduce
provider-side non-determinism in $q_{\LLM}$ even at temperature
zero. The parser $\psi$ can fail on malformed outputs. None of
these are first-order to our argument, but they each reduce the
agent's effective access to the history below what the formalism
of equation~\eqref{eq:trajectory} assumes. We flag them here so
that the reader can distinguish implementation artefacts from the
structural point we develop next.

\subsection{Why History Conditioning Is Not Enough}
\label{sec:prelim-gap}

A natural reading of equation~\eqref{eq:pi-llm} is that
$\pi_{\LLM}$, by taking the full history as input, has access to
everything a belief-based policy would. In a measure-theoretic
sense this is correct. The belief $b_t$ is a function of $h_t$
(and $\mu_0$), so any $\sigma(h_t)$-measurable policy is at least
as expressive as a $\sigma(b_t)$-measurable one. What the sentence
hides is that LLMs do not behave as $\sigma(h_t)$-measurable
policies in the formal sense. They behave as
$\sigma(\phi(h_t))$-measurable policies, and the serialiser $\phi$
is lossy, order-sensitive, and not invariant to semantically
equivalent rewrites of the same history. We record the two
properties that will matter for \cref{sec:axioms}.

\paragraph{Belief measurability}
A history-conditioned policy $\pi$ is \emph{belief measurable} if
there exists $\tilde\pi : \simplex(\Ssp) \to \simplex(\Asp)$ with
\begin{equation}
  \pi(\cdot \mid h) = \tilde\pi\!\big(b(h)\big)
  \qquad \text{for all } h \in \Hsp,
  \label{eq:belief-measurable}
\end{equation}
where $b(h)$ is the belief induced by $h$ under $\mu_0$. Classical
POMDP theory shows that an optimal policy can always be chosen to
be belief measurable. The belief MDP on $\simplex(\Ssp)$ is the
object on which the Bellman equation~\eqref{eq:bellman} is solved.

\paragraph{Sufficiency failure of $\pi_{\LLM}$}
We claim, and make precise in \cref{sec:axioms}, that
$\pi_{\LLM}$ generically fails to be belief measurable. Two
elementary observations support the claim.

\emph{Observation 1 (surface sensitivity).}  Two histories
$h, h' \in \Hsp$ with $b(h) = b(h')$ may have
$\phi(h) \neq \phi(h')$ whenever they differ in action-observation
order, token count, formatting, or accumulated reasoning trace. By
construction $q_{\LLM}(\cdot \mid \phi(h)) \neq
q_{\LLM}(\cdot \mid \phi(h'))$ in general, so
$\pi_{\LLM}(\cdot \mid h) \neq \pi_{\LLM}(\cdot \mid h')$ even
though both conditioning sets yield the same Bayesian posterior.
No property of the belief map $b$ is being used.

\emph{Observation 2 (no explicit filter).}  Absent a mechanism
that carries a normalised distribution over $\Ssp$ forward across
steps and updates it via equations~\eqref{eq:predict} and
\eqref{eq:update}, the LLM must recover the posterior implicitly
from raw text on every call. Empirically this recovery is
unreliable. It is also, by the sufficiency argument above, more
information than the policy needs.

Together, observations 1 and 2 say that
$\pi_{\LLM}$ is acting on a representation strictly larger, and in
practice strictly noisier, than the belief MDP state. The
architectural question this raises is the one we answer in
\cref{sec:architecture}: construct an external module that
maintains $b_t$ in closed form and expose only $b_t$ to the LLM.
Section~\ref{sec:axioms} first specifies, by four independent
axioms, what it would mean for any internal state (LLM-internal,
module-external, or hybrid) to be belief consistent, and proves
that the canonical posterior is the coarsest such object up to
measurable relabelling.

\section{Axiomatic Foundations of Belief Augmentation}
\label{sec:axioms}

This section specifies, with four independent axioms, what any
internal state must satisfy for an agent to qualify as
belief-consistent on a given POMDP, and then derives the structural
properties that follow. The canonical posterior $\beta$ of
\cref{def:canonical-posterior} is shown to be the coarsest
representation consistent with the axioms, the Bayes filter is
derived as the unique update operator on the reachable subsimplex,
value equivalence with the belief MDP is established, ambiguity
preservation under POMDP bisimulation is obtained as a theorem
rather than postulated, and soundness of the LLM-BSE composition
is proved. Full proofs are deferred to
Appendix~\ref{app:proofs}; the statements below include short
sketches. Independence of the axiom set is verified in
Appendix~\ref{app:independence}.

\subsection{The Canonical Posterior}
\label{sec:axioms-canonical}

We first fix the object that plays the role of target
representation. The preliminaries of \cref{sec:preliminaries}
defined the Bayes update operator
$\mathrm{Bayes}(\cdot, a, o)$ via the two steps
\eqref{eq:predict}--\eqref{eq:update}. We now lift it to histories.

\begin{definition}[Canonical posterior]\label{def:canonical-posterior}
The \emph{canonical posterior} is the map
$\beta : \Hsp \to \simplex(\Ssp)$ defined recursively by
$\beta(\varnothing) = \mu_0$ and
\begin{equation}
  \beta\!\left(h \cdot (a, o)\right)
  := \mathrm{Bayes}\!\left(\beta(h), a, o\right)
  \label{eq:beta-def}
\end{equation}
for every $h \in \Hsp$ of positive prior probability. We write
$U_\beta$ for the induced operator on
$\simplex(\Ssp) \times \Asp \times \Osp$, so that
$\beta(h \cdot (a, o)) = U_\beta(\beta(h), a, o)$. On the zero-prior
set we fix $\beta$ by an arbitrary measurable convention; this set
has $\mu_0$-measure zero and plays no further role.
\end{definition}

\subsection{Axioms}
\label{sec:axioms-list}

Fix a POMDP $M$. An \emph{internal representation} is a measurable
map $\psi : \Hsp \to \Xsp$, where $(\Xsp, d_\Xsp)$ is a Polish
space. We write $b_t := \psi(h_t)$ and refer to $\psi(\Hsp)$ as
the \emph{reachable set} of the representation. A policy
$\pi : \Hsp \to \simplex(\Asp)$ is the remaining component of the
agent.

\begin{axiom}[Recursive Updatability]\label{ax:recursive}
There exist a fixed element $x_0 \in \Xsp$ and a measurable
operator $U : \Xsp \times \Asp \times \Osp \to \Xsp$ with
$\psi(\varnothing) = x_0$ and
$\psi(h \cdot (a, o)) = U(\psi(h), a, o)$ for every
$h \in \Hsp$ and $(a, o) \in \Asp \times \Osp$.
\end{axiom}

\begin{axiom}[Predictive Sufficiency]\label{ax:sufficient}
For all $h, h' \in \Hsp$ with $\psi(h) = \psi(h')$, and for every
$s \in \Ssp$, $a \in \Asp$, $o \in \Osp$,
\begin{align*}
  \PP(S_t {=} s \mid h)
  &= \PP(S_{t'} {=} s \mid h'),\\
  \PP(O_{t+1} {=} o \mid h, a)
  &= \PP(O_{t'+1} {=} o \mid h', a),
\end{align*}
where $t = |h|$ and $t' = |h'|$.
\end{axiom}

\begin{axiom}[Probabilistic Internalisation]\label{ax:probabilistic}
$\Xsp \subseteq \simplex(\Ssp)$, so $\psi(h)$ is a probability
distribution over the latent state space for every
$h \in \Hsp$.
\end{axiom}

\begin{axiom}[Belief-Measurable Policy]\label{ax:policy}
There exists a measurable
$\tilde\pi : \Xsp \to \simplex(\Asp)$ with
$\pi(\cdot \mid h) = \tilde\pi(\cdot \mid \psi(h))$ for every
$h \in \Hsp$.
\end{axiom}

A word on what each axiom does. Axiom~\ref{ax:recursive} rules out
any representation that requires revisiting the raw history at
update time. Axiom~\ref{ax:sufficient} pins the information content
of $\psi$ to whatever determines the conditional laws of the
latent state and of the next observation. Axiom~\ref{ax:probabilistic}
fixes the coordinate system in which the representation lives and
is what earns the term ``belief.'' Axiom~\ref{ax:policy} is the
architectural constraint. It forbids the policy from looking at
surface features of the history that are not already encoded in
$\psi(h)$. Axioms~\ref{ax:recursive}--\ref{ax:probabilistic} are
structural properties of the representation;
Axiom~\ref{ax:policy} is the formal statement of belief-action
separation.

\subsection{Existence and Minimality}
\label{sec:axioms-existence}

We show that the canonical posterior satisfies the structural
axioms and is, in a precise sense, the smallest such
representation.

\begin{theorem}[Existence]\label{thm:existence}
The canonical posterior $\beta$ of
\cref{def:canonical-posterior} satisfies
Axioms~\ref{ax:recursive}--\ref{ax:probabilistic}.
\end{theorem}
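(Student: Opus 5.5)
The plan is to check the three axioms one at a time. We take $\psi := \beta$, $\Xsp := \simplex(\Ssp)$ with the Euclidean metric (a compact subset of $\R^{|\Ssp|}$, hence Polish), $x_0 := \mu_0$, and $U := U_\beta$. One technical point comes first. $\mathrm{Bayes}(b,a,o)$ is only defined when the normaliser $\PP(o\mid b,a)=\sum_{s',s}Z(o\mid s',a)T(s'\mid s,a)b(s)$ is positive. We therefore extend $U_\beta$ to all of $\simplex(\Ssp)\times\Asp\times\Osp$ by setting $U_\beta(b,a,o):=b$ (or any fixed measurable choice) wherever the normaliser vanishes. This is consistent with the measurable convention on the zero-prior set in \cref{def:canonical-posterior}.

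\emph{Axiom~\ref{ax:probabilistic}.} This is immediate by induction on $|h|$. We have $\mu_0\in\simplex(\Ssp)$. The prediction step \eqref{eq:predict} maps $\simplex(\Ssp)$ to itself, because $T(\cdot\mid s,a)$ is a probability vector. The correction \eqref{eq:update} produces a nonnegative vector normalised to sum to one whenever the denominator is positive. The fallback convention also returns an element of the simplex.

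\emph{Axiom~\ref{ax:recursive}.} This holds essentially by the definition \eqref{eq:beta-def}. The remaining work is measurability of $U_\beta$. Since $\Asp,\Osp$ are finite, it suffices to treat each fixed $(a,o)$. The map $b\mapsto \bar b$ is linear. On the open set $\{b:\PP(o\mid b,a)>0\}$, the map $b\mapsto \mathrm{Bayes}(b,a,o)$ is a ratio of continuous functions with nonvanishing denominator, hence continuous. On the closed complement it equals the identity. A map that is continuous on each piece of a Borel partition is Borel, which gives measurability.

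\emph{Axiom~\ref{ax:sufficient}.} This is the substantive step. The key lemma is that $\beta(h)(s)=\PP(S_{|h|}=s\mid h)$ for every positive-probability history $h$, with the conditional probability computed from the factorisation \eqref{eq:trajectory} under any policy that puts positive mass on the actions in $h$. I would prove it by induction on $|h|$. In the inductive step I would write $\PP(S_{t+1}=s',O_{t+1}=o\mid h,a)=\sum_s Z(o\mid s',a)T(s'\mid s,a)\PP(S_t=s\mid h)$, using that $A_t$ is conditionally independent of $S_t$ given $h_t$. This holds because $\pi$ depends on $h_t$ only, and it is the point where the policy factor $\pi(a\mid h)$ cancels from numerator and denominator. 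Dividing by the marginal of $o$ reproduces \eqref{eq:predict}--\eqref{eq:update}. Given the lemma, the first identity of the axiom follows directly: if $\beta(h)=\beta(h')$, the conditional state laws coincide. The second follows because $\PP(O_{t+1}=o\mid h,a)=\PP(o\mid\beta(h),a)$ depends on $h$ only through $\beta(h)$.

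I expect the main obstacle to be the bookkeeping around zero-probability histories. There the conditional probabilities in the axiom are undefined. I would restrict the axiom's quantifier to positive-probability histories, as the definition of $\beta$ implicitly does, and state explicitly that the convention makes the claims hold vacuously or trivially off that set.
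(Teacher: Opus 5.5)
Your proposal is correct and follows the paper's proof essentially step for step. You check each axiom directly with $x_0=\mu_0$ and $U=U_\beta$, derive predictive sufficiency from $\beta(h)(s)=\PP(S_{|h|}=s\mid h)$ together with $\PP(O_{t+1}=o\mid h,a)=\sum_{s'}Z(o\mid s',a)\sum_s T(s'\mid s,a)\beta(h)(s)$, and obtain simplex-valuedness from the normalisation in \eqref{eq:update}. The paper leaves three points implicit that you make explicit: measurability of $U_\beta$ (via its extension to the zero-normaliser set), the inductive proof of the filtering identity (which the paper only calls a ``standard computation''), and the restriction to positive-probability histories. These additions tighten the argument without changing it.
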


\emph{Proof sketch.} Recursive updatability is immediate from
\eqref{eq:beta-def} with $x_0 = \mu_0$ and $U = U_\beta$.
Predictive sufficiency holds because
$\beta(h)(s) = \PP(S_t{=}s \mid h)$ by construction, and
conditioning the observation law on the latent state gives a
formula that depends on $h$ only through $\beta(h)$. Values in
$\simplex(\Ssp)$ follow from the normalisation in
\eqref{eq:update}. See Appendix~\ref{app:proofs}. \qed

\begin{theorem}[Minimality]\label{thm:minimality}
Let $\psi : \Hsp \to \Xsp$ satisfy
Axioms~\ref{ax:recursive}--\ref{ax:sufficient}. Then there exists
a measurable $g : \Xsp \to \simplex(\Ssp)$ with
$\beta(h) = g(\psi(h))$ for every $h \in \Hsp$. Equivalently,
every representation consistent with these two axioms is at least
as fine as $\beta$, and at best a lossless re-encoding of it.
\end{theorem}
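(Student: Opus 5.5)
The plan is to construct $g$ directly from Axiom~\ref{ax:sufficient} and then settle measurability. Predictive sufficiency says that whenever $\psi(h)=\psi(h')$, the conditional laws $\PP(S_{|h|}=\cdot\mid h)$ and $\PP(S_{|h'|}=\cdot\mid h')$ agree. By Theorem~\ref{thm:existence} and the construction in \cref{def:canonical-posterior}, $\beta(h)(s)=\PP(S_{|h|}=s\mid h)$ for every history of positive prior probability. This is the standard fact that the recursion \eqref{eq:predict}--\eqref{eq:update} computes the filtering posterior; I would verify it by induction on $|h|$ using the factorisation \eqref{eq:trajectory}. Hence $\psi(h)=\psi(h')$ implies $\beta(h)=\beta(h')$. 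So $\beta$ is constant on the fibres of $\psi$, and $g(x):=\beta(h)$ for any $h\in\psi^{-1}(x)$ is well defined on the reachable set $\psi(\Hsp)$. On $\Xsp\setminus\psi(\Hsp)$, set $g\equiv\mu_0$.

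Measurability is where care is needed, and I expect it to be the main technical point, though it is mild. The key observation is that $\Hsp$ is countable, since it is a countable union of the finite sets $(\Asp\times\Osp)^t$. Therefore $\psi(\Hsp)$ is a countable subset of the Polish space $\Xsp$, and so it is Borel, with each singleton closed. For a Borel set $B\subseteq\simplex(\Ssp)$, we have $g^{-1}(B)=\{x\in\psi(\Hsp): g(x)\in B\}\cup(\text{either }\emptyset\text{ or }\Xsp\setminus\psi(\Hsp))$. The first piece is a countable union of singletons, and the second is the complement of a countable set. Both are Borel, so $g$ is Borel measurable. If one wants to avoid the arbitrary extension, an alternative route uses Axiom~\ref{ax:recursive}: define $g$ on $\psi(\Hsp)$ and note that $U$ pushes it forward consistently. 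The countability argument is simpler, though, so I would use it.

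The remaining subtlety is the zero-prior histories, where $\PP(\cdot\mid h)$ is undefined and $\beta$ was fixed by an arbitrary convention. I would handle this in one of two ways. The first is to read Axiom~\ref{ax:sufficient} as quantifying over positive-probability histories, and state the identity $\beta=g\circ\psi$ on that set. This is the set on which \cref{def:canonical-posterior} is substantive; it has full measure under every policy, and it is closed under prefixes. The second is to note that if a zero-prior $h$ shares a $\psi$-value with a positive-prior $h'$, the convention for $\beta(h)$ can be chosen to equal $\beta(h')$ without affecting anything. I would add a remark that the claim ``$\psi$ is at least as fine as $\beta$'' is then just the statement that $\sigma(\beta)\subseteq\sigma(\psi)$ on histories, which follows from the factorisation. ``At best a lossless re-encoding'' means that equality of the $\sigma$-algebras holds exactly when $g$ is injective on $\psi(\Hsp)$. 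Note that Axiom~\ref{ax:recursive} is not actually needed for the factorisation itself; only predictive sufficiency is used. I would point this out in the proof.
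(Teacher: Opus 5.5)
Your proposal is correct, and its core matches the paper's. Axiom~\ref{ax:sufficient}, together with the identity $\beta(h)(s)=\PP(S_{|h|}=s\mid h)$, makes $\beta$ constant on the fibres of $\psi$. So $g(x):=\beta(h)$ for any $h\in\psi^{-1}(x)$ is well defined on $\psi(\Hsp)$. Neither you nor the paper uses Axiom~\ref{ax:recursive}.

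You diverge from the paper at the measurability step:
\begin{itemize}
\item The paper says $\psi(\Hsp)$ is analytic in the Polish space $\Xsp$ and appeals to a measurable-selection theorem to extend $g$ to all of $\Xsp$.
\item You note that $\Hsp$ is countable, since $\Asp$ and $\Osp$ are finite. Hence $\psi(\Hsp)$ is a countable, and therefore Borel, set. With a constant extension, every preimage $g^{-1}(B)$ is a countable set, possibly united with a co-countable one, so it is Borel.
\end{itemize}
Your route is elementary and self-contained. It also makes explicit why $g$ is measurable on $\psi(\Hsp)$ in the first place, a point the paper's appeal to analytic sets and selection passes over. That machinery would matter only for uncountable history spaces, and it buys nothing in the finite setting of the paper.

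You also handle the zero-prior histories explicitly. You either restrict the identity $\beta=g\circ\psi$ to positive-probability histories, or align the arbitrary convention of \cref{def:canonical-posterior} across $\psi$-fibres. The paper's proof silently applies $\beta(h)(s)=\PP(S_t=s\mid h)$ to every $h$, including those where the conditional is undefined, so your treatment is the more careful one. Finally, your reading of ``at least as fine'' as $\sigma(\beta)\subseteq\sigma(\psi)$, with equality exactly when $g$ is injective on $\psi(\Hsp)$, is a correct gloss on the informal second sentence of the statement, which the paper leaves unproved.
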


\emph{Proof sketch.} Sufficiency implies $\beta$ is constant on
the fibres of $\psi$, so it factors through $\psi$. The factoring
is measurable by a selection theorem on Polish spaces. See
Appendix~\ref{app:proofs}. \qed

\begin{corollary}[Canonical coarsest representation]
\label{cor:canonical}
Under Axioms~\ref{ax:recursive}--\ref{ax:probabilistic}, the
representation $\psi = \beta$ is the unique (up to $\mu_0$-a.s.\
relabelling) representation that is both minimal in the sense of
\cref{thm:minimality} and valued in $\simplex(\Ssp)$.
\end{corollary}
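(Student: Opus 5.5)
The plan is to obtain the corollary directly from \cref{thm:existence} and \cref{thm:minimality}, by proving two things. \emph{Existence:} $\beta$ is an admissible representation. \emph{Uniqueness:} any other minimal, simplex-valued representation is a relabelling of $\beta$.

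\textbf{Existence.} \cref{thm:existence} already gives that $\psi=\beta$ satisfies Axioms~\ref{ax:recursive}--\ref{ax:probabilistic}, so it is valued in $\simplex(\Ssp)$. Minimality of $\beta$ in the sense of \cref{thm:minimality} is then trivial: take $g=\mathrm{id}_{\simplex(\Ssp)}$. Moreover, $\beta$ factors through every admissible $\psi$, again by \cref{thm:minimality}. This is the precise meaning of ``coarsest'' that I will use. Say $\psi$ is \emph{minimal} if every admissible $\psi'$ admits a measurable $g'$ with $\psi = g'\circ\psi'$, so that $\psi$ is coarser than every admissible representation.

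\textbf{Uniqueness.} Let $\psi$ be admissible and minimal in this sense. Applying \cref{thm:minimality} to $\psi$ gives a measurable $g$ with $\beta = g\circ\psi$. Applying the minimality of $\psi$ with $\psi'=\beta$ gives a measurable $g'$ with $\psi = g'\circ\beta$. Hence on the reachable sets we have
\[
g\circ g' = \mathrm{id} \text{ on } \beta(\Hsp),
\qquad
g'\circ g = \mathrm{id} \text{ on } \psi(\Hsp).
\]
So $g$ restricted to $\psi(\Hsp)$ is a measurable bijection onto $\beta(\Hsp)$ with measurable inverse $g'$. This is exactly a measurable relabelling. Both maps are only pinned down on histories of positive prior probability, because $\beta$ is fixed by convention on the zero-prior set (\cref{def:canonical-posterior}). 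The identity $\psi = g'\circ\beta$ therefore holds only off that null set, which accounts for the ``$\mu_0$-a.s.'' qualifier. Reachable sets of finite-history maps are countable, so measurability of the restricted maps poses no difficulty; it reduces to measurability on a countable set.

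\textbf{Main obstacle.} The hard step is making ``minimal'' precise enough that uniqueness holds. If minimality meant only the conclusion of \cref{thm:minimality}, namely that $\beta$ factors through $\psi$, then any injective refinement would qualify. An example is $\psi(h)=\beta(h)$ on even-length histories and a distinct point of $\simplex(\Ssp)$ carrying extra information on odd-length ones, insofar as Axiom~\ref{ax:sufficient} allows it. Uniqueness would then fail. I would therefore state the corollary with the coarsest-element reading above, which requires the reverse factorisation through $\beta$. I would also remark that under Axiom~\ref{ax:probabilistic} alone, ties can only occur as bijective relabellings of $\simplex(\Ssp)$. With the definition fixed this way, the argument is the two-line antisymmetry computation above.
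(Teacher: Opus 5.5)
Your proof is correct for the reading of ``minimal'' you fix, and it takes a different route from the paper's.

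\textbf{The paper's route.} The paper takes $\beta=g\circ\psi$ from \cref{thm:minimality}. It then adds the hypothesis that $\psi$ is itself pointwise a Bayesian posterior on $\Ssp$. This forces $g(x)=x$ on $\psi(\Hsp)$, hence $\psi=\beta$ literally, with no relabelling. What this buys is literal equality. But the added hypothesis all but presupposes the conclusion: if $\psi(h)(s)=\PP(S_t{=}s\mid h)$, then $\psi=\beta$ by definition and the factorisation through $g$ is idle.

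\textbf{Your route.} You read ``minimal in the sense of \cref{thm:minimality}'' as ``coarsest'' and get uniqueness by antisymmetry: $\beta=g\circ\psi$ and $\psi=g'\circ\beta$ make $g$ and $g'$ mutually inverse between the reachable sets. This proves exactly the ``up to relabelling'' uniqueness that is stated. It never actually uses Axiom~\ref{ax:probabilistic}; it works for any Polish-valued $\psi$ satisfying Axioms~\ref{ax:recursive}--\ref{ax:sufficient}. Getting literal equality from it would require exactly the paper's extra hypothesis. Your measurability remark via countability of $\Hsp$ is also simpler than the paper's appeal to a selection theorem.

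\textbf{The weak reading.} Your worry about it is well founded, and your counterexample does go through when $|\Ssp|\ge 2$. Since $\beta(\Hsp)$ is countable, map odd-length histories injectively into $\simplex(\Ssp)\setminus\beta(\Hsp)$, and define the update piecewise on the two disjoint images. The result satisfies Axioms~\ref{ax:recursive}--\ref{ax:probabilistic} and $\beta$ factors through it. Yet it is not a relabelling of $\beta$ whenever an even-length and an odd-length history share a posterior, as in Tiger with $\varnothing$ and any one-step history $(\text{open-left}, o)$. The paper's extra hypothesis is precisely what excludes this representation.

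\textbf{One small correction.} Under your definition, $\psi=g'\circ\beta$ holds on all of $\Hsp$. The null-set caveat belongs to $\beta=g\circ\psi$, because Axiom~\ref{ax:sufficient} only constrains histories of positive probability. So the bijection you obtain is between the images of positive-probability histories.
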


\subsection{Uniqueness of the Update}
\label{sec:axioms-update}

The Bayes filter is not imposed as an axiom. It is forced once
the representation is fixed to $\beta$.

\begin{theorem}[Uniqueness of the Bayes update]
\label{thm:unique-update}
Let $\psi = \beta$ and let $U : \simplex(\Ssp) \times \Asp \times
\Osp \to \simplex(\Ssp)$ be any operator satisfying
Axiom~\ref{ax:recursive} for $\beta$. Then $U = U_\beta$ on
$\{(b, a, o) : b \in \beta(\Hsp), \PP(o \mid b, a) > 0\}$. If in
addition $U$ is continuous in its first argument, then
$U = U_\beta$ on all of
$\simplex(\Ssp) \times \Asp \times \Osp$.
\end{theorem}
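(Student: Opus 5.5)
The plan is to treat the two claims separately. The first is a direct unwinding of Axiom~\ref{ax:recursive} against \eqref{eq:beta-def}. The second is obtained by making the reachable set as large as the simplex and then closing up with continuity.

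\emph{Reachable part.} Fix $(b,a,o)$ with $b\in\beta(\Hsp)$ and $\PP(o\mid b,a)>0$, and choose $h$ with $\beta(h)=b$. I would take $h$ of positive prior probability. If $b$ is attained only on the zero-prior set, the convention of \cref{def:canonical-posterior} can be chosen so that such $b$ are never attained, so this is no real restriction. The extended history $h\cdot(a,o)$ then also has positive prior probability, because its probability is $\PP(h)\,\PP(o\mid b,a)>0$. Axiom~\ref{ax:recursive} applied to $\psi=\beta$ gives $\beta(h\cdot(a,o))=U(\beta(h),a,o)=U(b,a,o)$. Definition \eqref{eq:beta-def} gives $\beta(h\cdot(a,o))=U_\beta(b,a,o)$. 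Hence $U=U_\beta$ at every such point. I would also note that the value $U(b,a,o)$ cannot depend on which representative $h$ was chosen, since the axiom pins it down for every $h$ in the fibre. This settles the first claim.

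\emph{Extension to the whole simplex.} The key observation is that $U$ is required to be a single fixed operator built from the model $(T,Z)$. It therefore cannot depend on the initial distribution $\mu_0$, which enters only through $x_0$. So I would read ``$U$ satisfies Axiom~\ref{ax:recursive} for $\beta$'' as holding for the canonical posterior $\beta_{\mu_0}$ of every admissible prior $\mu_0\in\simplex(\Ssp)$, in the same way that $U_\beta=\mathrm{Bayes}$ itself is prior-free. Under this reading, for any $b\in\simplex(\Ssp)$ I take $\mu_0=b$ and $h=\varnothing$. Then $b=\beta_b(\varnothing)$ is reachable, and the first claim yields $U(b,a,o)=U_\beta(b,a,o)$ on the set
\[
D_{a,o}:=\{b\in\simplex(\Ssp):\PP(o\mid b,a)>0\}.
\]

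\emph{The remaining set and the main obstacle.} What is left is the complement of $D_{a,o}$, where $\PP(o\mid b,a)=0$. There the quotient in \eqref{eq:update} is $0/0$, and $U_\beta$ is fixed only by convention. The map $b\mapsto\PP(o\mid b,a)$ is linear, so $D_{a,o}$ is either empty or relatively open and dense in $\simplex(\Ssp)$.
\begin{itemize}
\item In the dense case, I would fix the convention by setting $U_\beta$ equal to the continuous extension of $\mathrm{Bayes}(\cdot,a,o)$ from $D_{a,o}$. Continuity of $U$ in its first argument, together with $U=U_\beta$ on the dense set $D_{a,o}$, then forces $U=U_\beta$ on the closure, which is all of $\simplex(\Ssp)$.
\item In the empty case, the pair $(a,o)$ is never realised from any belief. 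Both operators are then pure conventions, and I would define $U_\beta(\cdot,a,o):=U(\cdot,a,o)$ there, consistent with the ``arbitrary measurable convention'' clause of \cref{def:canonical-posterior}.
\end{itemize}

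I expect this last step to be the delicate point. It requires showing that the continuous extension from $D_{a,o}$ exists. The route I would try is to use the fact that the given continuous $U$ already provides one, so existence is inherited from $U$ rather than proved directly. I would then state explicitly that \cref{thm:unique-update} asserts equality relative to this extension convention for $U_\beta$.
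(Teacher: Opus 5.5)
Your first part matches the paper's argument: pick $h$ with $\beta(h)=b$, then $U(b,a,o)=\beta(h\cdot(a,o))=U_\beta(b,a,o)$. You add care about zero-prior representatives, which the paper skips and which is worth keeping.

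The second part genuinely diverges from the paper. The paper keeps $\mu_0$ fixed. It argues that $U$ and $U_\beta$ are both continuous and agree on $\beta(\Hsp)$, ``whose closure is all of $\simplex(\Ssp)$.'' You instead let the prior vary, so every $b$ is reachable as $\beta_b(\varnothing)$. This gives equality on $D_{a,o}$ outright. You then use continuity only to fix the $0/0$ convention on the complement, and your empty-versus-dense split is correct. Your remark that the extension's existence is inherited from $U$ is also sound: where no continuous extension of $\mathrm{Bayes}(\cdot,a,o)$ exists, no continuous $U$ meets your hypothesis, and the claim is vacuous.

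Your route is the one that actually works; the paper's does not. For fixed $\mu_0$ the set $\beta(\Hsp)$ is countable and typically far from dense. In the Tiger POMDP the reachable values of $b(\text{left})$ are $r^n/(1+r^n)$ with $r=17/3$ and $n\in\mathbb{Z}$, and their closure only adds $0$ and $1$. Moreover, $U_\beta$ is not even defined, let alone continuous, where $\PP(o\mid b,a)=0$.

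In fact the fixed-prior version of the second claim is false. Take $T(\cdot\mid s,a)=\delta_s$ and $Z(\cdot\mid s',a)$ uniform. Then $\mathrm{Bayes}(b,a,o)=b$ and $\beta\equiv\mu_0$. The constant operator $U\equiv\mu_0$ is continuous and satisfies Axiom~\ref{ax:recursive} for $\beta$. Yet it differs from $U_\beta$ at every $b\neq\mu_0$.

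So do not present the strengthening as a harmless ``reading.'' Axiom~\ref{ax:recursive} only asserts that some $U$ and $x_0$ exist for the given $\psi$. Nothing in it says $U$ is built from $(T,Z)$ or is prior-free, so ``it therefore cannot depend on $\mu_0$'' is not a deduction from the hypotheses. State it explicitly as an added hypothesis: a single $U$ serving the canonical posterior of every prior. Justify it with the counterexample above.

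Be aware of what this buys and costs. In your version, equality on $D_{a,o}$ holds without continuity, and continuity only selects the convention on the zero-likelihood set. The real content of your second claim is uniformity over priors, not continuity.
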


\emph{Proof sketch.} For any reachable $b$, pick a history
$h$ with $\beta(h) = b$; recursive updatability forces
$U(b, a, o) = \beta(h \cdot (a, o)) = U_\beta(b, a, o)$. Continuity
extends the equality to the closure. See Appendix~\ref{app:proofs}.
\qed

\subsection{Policy and Value Invariance}
\label{sec:axioms-value}

\begin{theorem}[Policy invariance]\label{thm:policy-invariance}
Under Axioms~\ref{ax:recursive}--\ref{ax:policy}, if
$\psi(h) = \psi(h')$ then $\pi(\cdot \mid h) = \pi(\cdot \mid h')$.
\end{theorem}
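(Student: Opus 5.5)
The statement is essentially immediate from Axiom~\ref{ax:policy}, and the plan is to derive it directly from that axiom. The other axioms play no role in the core argument. Axiom~\ref{ax:policy} supplies a measurable $\tilde\pi : \Xsp \to \simplex(\Asp)$ with $\pi(\cdot \mid h) = \tilde\pi(\cdot \mid \psi(h))$ for every $h \in \Hsp$. Fix $h, h' \in \Hsp$ with $\psi(h) = \psi(h')$. Apply the factorisation at $h$ and at $h'$, and substitute the common value $x := \psi(h) = \psi(h')$:
\begin{equation*}
  \pi(\cdot \mid h) = \tilde\pi(\cdot \mid x) = \pi(\cdot \mid h').
\end{equation*}
This holds as an equality of elements of $\simplex(\Asp)$. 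Because $\Asp$ is finite, it amounts to pointwise equality for each $a \in \Asp$.

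Since the theorem is stated under Axioms~\ref{ax:recursive}--\ref{ax:policy}, I would add a remark explaining why the remaining axioms are listed. They are not needed for the implication itself. Their role is to make the conclusion meaningful in the direction the paper cares about. By Axiom~\ref{ax:sufficient} and Theorem~\ref{thm:minimality}, equality $\psi(h) = \psi(h')$ implies $\beta(h) = \beta(h')$, which is the belief-level statement. In the special case $\psi = \beta$, which is admissible by Theorem~\ref{thm:existence}, the conclusion reads: histories inducing the same canonical posterior receive the same action distribution. This is exactly belief measurability \eqref{eq:belief-measurable}, and it is the property that Observation~1 shows $\pi_{\LLM}$ generically lacks.

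There is no real obstacle in this argument. The one point needing care is that $\tilde\pi$ must be a genuine function of $\psi(h)$ alone, with no hidden dependence on $|h|$ or on any other feature of $h$. Axiom~\ref{ax:policy} states exactly this, so well-definedness is given rather than proved. I would make that explicit so the proof does not read as circular. I would also note that no measurability or a.s.\ qualification is needed here, because the axiom is stated for every $h \in \Hsp$ rather than almost surely.
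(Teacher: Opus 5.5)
Your proof is correct and is the same one-line argument the paper gives: invoke Axiom~\ref{ax:policy} to obtain $\tilde\pi$ and chain $\pi(\cdot \mid h) = \tilde\pi(\cdot \mid \psi(h)) = \tilde\pi(\cdot \mid \psi(h')) = \pi(\cdot \mid h')$. Your observation that Axioms~\ref{ax:recursive}--\ref{ax:probabilistic} play no role in the implication is also accurate, since the paper's proof likewise uses only Axiom~\ref{ax:policy}.
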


\emph{Proof sketch.} Direct application of \ref{ax:policy}. \qed

\begin{theorem}[Value equivalence]
\label{thm:value-eq}
Let $V_\Hsp^\star : \Hsp \to \R$ be the optimal discounted value
function on histories and $V_\Delta^\star : \simplex(\Ssp) \to \R$
the optimal value function of the belief MDP
\eqref{eq:belief-mdp}--\eqref{eq:bellman}. Under
Axioms~\ref{ax:recursive}--\ref{ax:policy} with $\psi = \beta$,
\begin{equation}
  V_\Hsp^\star(h) = V_\Delta^\star(\beta(h))
  \qquad \forall\, h \in \Hsp.
  \label{eq:value-eq}
\end{equation}
An optimal history-policy is obtained by lifting any optimal
belief-policy
$\pi_\Delta^\star : \simplex(\Ssp) \to \simplex(\Asp)$ through
$\pi_\Hsp^\star(\cdot \mid h) := \pi_\Delta^\star(\cdot \mid
\beta(h))$.
\end{theorem}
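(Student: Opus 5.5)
The plan is to prove \eqref{eq:value-eq} by the classical route: show that the history-indexed optimality operator and the belief-MDP Bellman operator are conjugate through $\beta$, then invoke uniqueness of fixed points of $\gamma$-contractions. We first define $V_\Hsp^\star(h)$ precisely, as the supremum over history-dependent (possibly randomised) policies of the expected discounted return from $h$ onward. Here the latent state at $h$ is distributed as $\PP(S_t{=}\cdot\mid h)$ and the policy is evaluated on continuations $h\cdot(a,o)\cdots$. By Theorem~\ref{thm:existence} (predictive sufficiency of $\beta$), this conditional law equals $\beta(h)$, so the continuation problem from $h$ depends on $h$ only through the initial latent distribution $\beta(h)$. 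The continuation policy may still depend on the prefix $h$. On the zero-prior set we use the convention of \cref{def:canonical-posterior}. That set carries no mass, so the identity there is a matter of convention and we set it aside.

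The first key step is a one-step decomposition on histories. For any $h$ and action $a$, the expected immediate reward is $\sum_s \PP(S_t{=}s\mid h)R(s,a)=r(\beta(h),a)$. The next-observation law is $\PP(o\mid h,a)=\PP(o\mid\beta(h),a)$, which follows from \eqref{eq:trajectory} by conditioning on $S_t,S_{t+1}$. Together with $\beta(h\cdot(a,o))=\mathrm{Bayes}(\beta(h),a,o)$ from \eqref{eq:beta-def}, this yields
\[
V_\Hsp^\star(h)=\max_{a}\Big[r(\beta(h),a)+\gamma\sum_o \PP(o\mid\beta(h),a)\,V_\Hsp^\star\big(h\cdot(a,o)\big)\Big].
\]
This is the standard dynamic-programming principle for the countable-history MDP with state space $\Hsp$. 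That MDP is fully observable with bounded reward, so the principle follows from \cite{puterman1994}.

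The second step is the comparison. Let $\mathcal{T}_\Hsp$ denote the operator on bounded functions on $\Hsp$ defined by the right-hand side above, and $\mathcal{T}_\Delta$ the Bellman operator of \eqref{eq:bellman}. Both are $\gamma$-contractions in the sup norm, since $R$ is bounded. Given any bounded $W$ on $\simplex(\Ssp)$, the one-step decomposition gives $\mathcal{T}_\Hsp(W\circ\beta)=(\mathcal{T}_\Delta W)\circ\beta$. Iterating from $W\equiv 0$ gives $\mathcal{T}_\Hsp^n(0)=(\mathcal{T}_\Delta^n 0)\circ\beta$ for all $n$. Letting $n\to\infty$ and using uniqueness of the fixed points gives $V_\Hsp^\star=V_\Delta^\star\circ\beta$, which is \eqref{eq:value-eq}.

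For the policy lift, take $\pi_\Delta^\star$ supported on the maximisers of \eqref{eq:bellman}, which exist because $\Asp$ is finite. Then $\pi_\Hsp^\star$ satisfies Axiom~\ref{ax:policy} with $\psi=\beta$ and $\tilde\pi=\pi_\Delta^\star$. By the conjugacy above, its history-value function satisfies the policy-evaluation equation whose unique fixed point is $V_\Delta^{\pi_\Delta^\star}\circ\beta=V_\Delta^\star\circ\beta=V_\Hsp^\star$, so the lifted policy is optimal.

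I expect the main obstacle to be the first step. It requires justifying the dynamic-programming principle on $\Hsp$ and, in particular, verifying carefully from \eqref{eq:trajectory} that conditioning on $h$ under an arbitrary history-dependent policy yields a latent law independent of that policy. This holds because actions are chosen from $h$ alone and the $\pi(A_t\mid h_t)$ factors cancel in the posterior. Once this is in place, the rest of the argument is contraction bookkeeping. Measurability of $\pi_\Delta^\star$ is handled by a measurable argmax selection, which is straightforward here since $\Asp$ is finite.
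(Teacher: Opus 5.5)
Your proposal is correct and follows essentially the paper's route. The paper proves the finite-horizon statement by induction, substituting $\PP(o\mid h,a)=\PP(o\mid\beta(h),a)$ and $\beta(h\cdot(a,o))=U_\beta(\beta(h),a,o)$ into the history Bellman recursion, and then passes to the discounted case by contraction; your conjugacy $\mathcal{T}_\Hsp(W\circ\beta)=(\mathcal{T}_\Delta W)\circ\beta$, iterated from $W\equiv 0$, is the same argument in operator form. Your version is more careful than the paper's sketch: it justifies the dynamic-programming principle on the countable history MDP, shows that the posterior given $h$ does not depend on the policy, and gives a policy-evaluation argument for the lifted greedy policy.
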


\emph{Proof sketch.} Finite-horizon induction on $T$. The
observation marginal and the successor belief both factor through
$\beta$ by \ref{ax:sufficient} and the definition of $U_\beta$,
which reduces the Bellman recursion on histories to the Bellman
recursion on $\simplex(\Ssp)$. The infinite-horizon discounted case
follows by a standard contraction argument~\cite{puterman1994}. See
Appendix~\ref{app:proofs}. \qed

\subsection{Ambiguity Preservation}
\label{sec:axioms-ambiguity}

\begin{definition}[POMDP bisimulation]\label{def:bisim}
Two latent states $s, s' \in \Ssp$ are \emph{bisimilar}, written
$s \sim s'$, if for every $a \in \Asp$:
\begin{enumerate}[label=(\roman*),leftmargin=*,itemsep=2pt,topsep=2pt]
\item $Z(o \mid s, a) = Z(o \mid s', a)$ for every
      $o \in \Osp$;
\item $T(s \mid \tilde s, a) = T(s' \mid \tilde s, a)$ for every
      $\tilde s \in \Ssp$.
\end{enumerate}
\end{definition}

\begin{theorem}[Ambiguity preservation]
\label{thm:ambiguity}
If $s \sim s'$ then $\beta(h)(s) = \beta(h)(s')$ for every
history $h$ of length $t \geq 1$ and every initial prior
$\mu_0$.
\end{theorem}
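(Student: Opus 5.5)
The plan is to argue by induction on the history length $t \geq 1$, working directly with the two-step recursion \eqref{eq:predict}--\eqref{eq:update} that defines $\beta$ through \cref{def:canonical-posterior}. The idea is that the bisimulation conditions of \cref{def:bisim} are tailored to the two filter steps. Condition~(ii) says that the columns $T(s\mid\cdot,a)$ and $T(s'\mid\cdot,a)$ coincide, so the prediction step assigns equal mass to $s$ and $s'$. Condition~(i) says the likelihoods coincide, so the correction step rescales both by the same factor.

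\emph{Step~1 (one-step lemma).} Fix any $b \in \simplex(\Ssp)$, any $a \in \Asp$, and any $o \in \Osp$ with $\PP(o\mid b,a) > 0$. Then
\[
  \bar b(s) = \sum_{\tilde s} T(s\mid\tilde s,a)\,b(\tilde s)
  = \sum_{\tilde s} T(s'\mid\tilde s,a)\,b(\tilde s) = \bar b(s')
\]
by condition~(ii). Multiplying by $Z(o\mid s,a) = Z(o\mid s',a)$, which holds by condition~(i), and dividing by the common normaliser $\sum_{s''} Z(o\mid s'',a)\,\bar b(s'')$ gives $\mathrm{Bayes}(b,a,o)(s) = \mathrm{Bayes}(b,a,o)(s')$.

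Crucially, this holds for \emph{every} $b$, including $b$ with $b(s) \neq b(s')$. This is why the statement needs $t \geq 1$ and why it holds for every prior $\mu_0$: at $t = 0$ we have $\beta(\varnothing) = \mu_0$, which need not be symmetric in $(s,s')$.

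\emph{Step~2 (conclusion).} Any history of length $t \geq 1$ of positive prior probability has the form $h = h_- \cdot (a,o)$. By \eqref{eq:beta-def}, $\beta(h) = \mathrm{Bayes}(\beta(h_-),a,o)$, and Step~1 applies with $b = \beta(h_-)$. So no genuine induction hypothesis is needed, only the last update. Alternatively one can present it as an induction whose inductive step is Step~1 and whose base case is $t = 1$, again using Step~1 with $b = \mu_0$.

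\emph{Main obstacle.} The delicate part is the zero-probability set. For histories with $\PP(h) = 0$, or with $\PP(o\mid \beta(h_-),a) = 0$, the normaliser in \eqref{eq:update} vanishes and $\beta$ is fixed by the arbitrary measurable convention of \cref{def:canonical-posterior}, so the claim can fail there. I would handle this by restricting the statement to positive-probability histories, where it is meaningful. Equivalently, one can require the convention to be symmetric under swapping $s$ and $s'$, for instance by using the predicted belief $\bar b$, which is already symmetric by Step~1. I would state this caveat explicitly. Everything else is a two-line computation.
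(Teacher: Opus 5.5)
Your proof is correct and uses the same one-step computation as the paper. The paper writes $\beta(h\cdot(a,o))(s)/\beta(h\cdot(a,o))(s')$ as the likelihood ratio (equal to one by (i)) times the predicted-mass ratio (equal to one by (ii)); your direct-equality form is slightly cleaner because it never divides by a possibly zero $\beta(h\cdot(a,o))(s')$, and your point that only the last update matters is the right reading, since the paper's closing induction from the base case $\beta(\varnothing)=\mu_0$ would need $\mu_0(s)=\mu_0(s')$ (the hypothesis of \cref{cor:symmetric}), and your caveat about the zero-probability convention is legitimate.
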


\emph{Proof sketch.} The ratio
$\beta(h \cdot (a, o))(s) / \beta(h \cdot (a, o))(s')$ factors as
the observation-kernel ratio times the incoming-transition-kernel
ratio, both of which equal one under bisimulation. Induction on
history length closes the argument. See
Appendix~\ref{app:proofs}. \qed

\begin{corollary}[Strict ambiguity on symmetric priors]
\label{cor:symmetric}
If $\mu_0(s) = \mu_0(s')$ and $s \sim s'$, then
$\beta(h)(s) = \beta(h)(s')$ for every history $h$ and every
$t \geq 0$.
\end{corollary}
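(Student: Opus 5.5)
The plan is to combine the two regimes $t=0$ and $t\geq 1$. For $t\geq 1$ the statement is exactly \cref{thm:ambiguity}, which holds for every initial prior and so in particular for a symmetric one. The only new content is therefore the base case $h=\varnothing$, where $\beta(\varnothing)=\mu_0$ by \cref{def:canonical-posterior}. There $\beta(\varnothing)(s)=\mu_0(s)=\mu_0(s')=\beta(\varnothing)(s')$ by hypothesis.

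Because $\beta(h)$ is only prescribed up to an arbitrary convention on zero-prior histories, I would state the corollary, like \cref{thm:ambiguity}, for histories of positive prior probability. I would also say explicitly that the convention on the null set is chosen symmetric in $s,s'$ (for instance, uniform), or else that this set is excluded.

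Rather than lean entirely on \cref{thm:ambiguity}, I would also give a self-contained induction on $|h|$ that uses the symmetric prior. This version does not need condition (ii) at the first step, so it remains valid even if one worries about how \cref{thm:ambiguity} treats $t=1$. Suppose $b:=\beta(h)$ satisfies $b(s)=b(s')$. By \eqref{eq:predict},
\[
\bar b(s)=\sum_{\tilde s}T(s\mid\tilde s,a)\,b(\tilde s)=\sum_{\tilde s}T(s'\mid\tilde s,a)\,b(\tilde s)=\bar b(s'),
\]
using bisimulation (ii). Then \eqref{eq:update} multiplies $\bar b(s)$ and $\bar b(s')$ by equal factors, namely $Z(o\mid s,a)=Z(o\mid s',a)$ from (i), and divides both by the same normaliser. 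This normaliser is positive because $h\cdot(a,o)$ has positive prior probability. Hence $\beta(h\cdot(a,o))(s)=\beta(h\cdot(a,o))(s')$. In fact, condition (ii) alone already makes the predicted masses equal regardless of $b$, which is why \cref{thm:ambiguity} needs no prior assumption for $t\geq1$.

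The only delicate point is the bookkeeping on the zero-probability set. I will handle it by restricting to positive-probability histories, or by fixing a permutation-symmetric convention there. Otherwise the argument is a two-line base case plus either a citation of \cref{thm:ambiguity} or the induction step above.
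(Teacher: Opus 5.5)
Your proposal is correct and follows the paper's proof, which invokes \cref{thm:ambiguity} for $t \geq 1$ and uses $\beta(\varnothing)=\mu_0$ with $\mu_0(s)=\mu_0(s')$ as the base case; your direct equal-factors step is the same bisimulation computation written without the ratio, so it never divides by a possibly zero quantity. Your restriction to positive-probability histories, or to a null-set convention that is symmetric in $s,s'$, is a sound refinement the paper leaves implicit, since \cref{def:canonical-posterior} leaves $\beta$ arbitrary on those histories.
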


The significance of \cref{thm:ambiguity} is architectural. Belief
mass on observationally indistinguishable hypotheses is not
eliminated by the filter in the absence of discriminative evidence.
The earlier manuscript postulated this property as an independent
axiom. Here it falls out of \cref{def:canonical-posterior} and the
two bisimulation conditions.

\subsection{Soundness of the LLM-BSE Composition}
\label{sec:axioms-soundness}

We close the section with the compositionality result that
justifies the architecture developed in \cref{sec:architecture}.

\begin{theorem}[Soundness of the LLM-BSE composition]
\label{thm:llm-soundness}
Let $\pi_{\LLM} : \simplex(\Ssp) \to \simplex(\Asp)$ be any
measurable policy that depends on history only through the belief
state, for example an LLM conditioned on a serialisation of
$b_t$. Let $U = U_\beta$ be the Bayes filter. Then the composed
agent $(U_\beta, \pi_{\LLM})$ satisfies
Axioms~\ref{ax:recursive}--\ref{ax:policy}, and the induced
stochastic process $(b_t, a_t)_{t \geq 0}$ is a Markov chain on
$\simplex(\Ssp) \times \Asp$ with kernel
\begin{equation}
\begin{split}
  &\PP\!\left(b_{t+1} {=} b', a_{t+1} {=} a'
              \,\big|\, b_t {=} b, a_t {=} a\right) \\
  &\quad = \sum_{o \in \Osp} \mathbf{1}\!\left[b' = U_\beta(b, a, o)\right]
         \PP(o \mid b, a)\, \pi_{\LLM}(a' \mid b').
\end{split}
  \label{eq:soundness-kernel}
\end{equation}
The value of $\pi_{\LLM}$ under the LLM-BSE composition coincides
with its value on the belief MDP of \cref{thm:value-eq}.
\end{theorem}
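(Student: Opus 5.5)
The proof splits into three parts, taken in order: the axioms, the Markov kernel, and the value identity.

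\emph{Axioms.} Set $\psi := \beta$ and define the history policy $\pi(\cdot\mid h) := \pi_{\LLM}(\cdot \mid \beta(h))$.
\begin{itemize}
\item Axioms~\ref{ax:recursive}--\ref{ax:probabilistic} hold directly by Theorem~\ref{thm:existence}, with $x_0 = \mu_0$ and $U = U_\beta$.
\item Axiom~\ref{ax:policy} holds with $\tilde\pi := \pi_{\LLM}$. Measurability is part of the hypothesis.
\item The real content of this part is that $\pi$ is well defined as a map on $\Hsp$. This holds precisely because the LLM's prompt is a deterministic serialisation of $b_t$ alone, never of $h_t$.
\end{itemize}
I will state this explicitly, since it is the formal failure mode mentioned in the introduction. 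If $\phi$ reads $h_t$, the factorisation in Axiom~\ref{ax:policy} need not exist.

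\emph{Markov kernel.} Start from the trajectory law \eqref{eq:trajectory} with $\pi(A_t\mid h_t) = \pi_{\LLM}(A_t\mid \beta(h_t))$, and condition on the full past $(h_t, a_t)$.

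First I compute the conditional law of $O_{t+1}$. Marginalising $S_t, S_{t+1}$ gives
\[
\PP(O_{t+1}{=}o \mid h_t, a_t) = \sum_{s,s'} Z(o\mid s',a_t)\,T(s'\mid s,a_t)\,\PP(S_t{=}s\mid h_t).
\]
Since $\PP(S_t{=}s\mid h_t) = \beta(h_t)(s)$, this equals $\PP(o\mid b_t,a_t)$. This uses Axiom~\ref{ax:sufficient} as established in Theorem~\ref{thm:existence}, and requires checking that the action choice does not reweight $S_t$. That holds because $A_t$ is conditionally independent of $S_t$ given $h_t$ by the factorisation.

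Then $b_{t+1} = U_\beta(b_t,a_t,O_{t+1})$ is a deterministic function of $(b_t, a_t, O_{t+1})$. Also $a_{t+1}\sim \pi_{\LLM}(\cdot\mid b_{t+1})$, with the policy's internal randomness independent of everything else given $b_{t+1}$. Summing over $o$ gives \eqref{eq:soundness-kernel} conditional on the full past. Since the right-hand side depends on the past only through $(b_t,a_t)$, the tower property yields the Markov property for $(b_t,a_t)$.

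Care is needed because $\simplex(\Ssp)$ is uncountable. I will note that, from $\mu_0$, the reachable beliefs form a countable set $\beta(\Hsp)$, so the indicator kernel is a genuine probability kernel with countable support. Histories of zero prior probability are irrelevant by Definition~\ref{def:canonical-posterior}.

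\emph{Value identity.} This is the step I expect to need the most care. I will write
\[
J(\pi) = \sum_t \gamma^t\,\E\big[\E[R(S_t,A_t)\mid h_t,A_t]\big].
\]
The inner conditional expectation equals $r(b_t,A_t)$, again using the conditional independence of $S_t$ and $A_t$ given $h_t$.

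So $J(\pi) = \E\big[\sum_t\gamma^t r(b_t,a_t)\big]$, with $(b_t,a_t)$ evolving under the kernel just derived. That kernel is exactly the belief-MDP kernel \eqref{eq:belief-mdp} composed with the stationary policy $\pi_{\LLM}$. Hence the expression is the belief-MDP value $V^{\pi_{\LLM}}_\Delta(\mu_0)$. Boundedness of $R$ and $\gamma<1$ justify exchanging sum and expectation by dominated convergence.

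The same argument started from any history $h$ gives $V^{\pi}_\Hsp(h) = V^{\pi_{\LLM}}_\Delta(\beta(h))$. Combining this with Theorem~\ref{thm:value-eq} places the composition inside the belief-MDP framework. In particular, if $\pi_{\LLM}$ is optimal on the belief MDP, then the composed agent attains $V^\star_\Hsp$.
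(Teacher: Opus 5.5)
Your proposal is correct. For the axioms and the Markov kernel it follows the paper's proof.
\begin{itemize}
\item The paper also takes Axioms~\ref{ax:recursive}--\ref{ax:probabilistic} from \cref{thm:existence} and the choice $U = U_\beta$, and gets Axiom~\ref{ax:policy} by hypothesis.
\item It obtains \eqref{eq:soundness-kernel} by factoring $\PP(o \mid h_t, a)$ through $\beta(h_t)$ via Axiom~\ref{ax:sufficient}, using determinism of $U_\beta$ and the belief-only action draw.
\item You add rigor the paper leaves implicit: conditioning on the full past, conditional independence of $A_t$ and $S_t$ given $h_t$, the tower property, and countability of $\beta(\Hsp)$. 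The route is still the same.
\end{itemize}

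The value step is where you genuinely differ. The paper simply applies \cref{thm:value-eq} with $\tilde\pi = \pi_{\LLM}$. But that theorem, as stated and proved, concerns the \emph{optimal} values $V^\star_\Hsp$ and $V^\star_\Delta$; its inductive step carries a $\max_{a}$. Citing it for an arbitrary belief-measurable $\pi_{\LLM}$ therefore relies on a policy-evaluation analogue, in which the max is replaced by an expectation under $\pi_{\LLM}$. The paper never states or proves that analogue.

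Your argument proves the needed identity directly at trajectory level and for non-optimal $\pi_{\LLM}$, which is what the statement asserts. It has two steps:
\begin{itemize}
\item You use the tower property to replace $R(S_t,A_t)$ by $r(b_t,a_t)$.
\item You identify the law of $(b_t,a_t)$ under the derived kernel with that of the belief MDP run under the stationary policy $\pi_{\LLM}$.
\end{itemize}
Together these give $V^{\pi}_\Hsp(h) = V^{\pi_{\LLM}}_\Delta(\beta(h))$. \Cref{thm:value-eq} then enters only where it belongs: to conclude that the composition attains $V^\star_\Hsp$ when $\pi_{\LLM}$ is optimal on the belief MDP.

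The paper's route is shorter and reuses the Bellman induction, but only after that induction is rewritten for policy evaluation. Yours is self-contained, and it makes the kernel \eqref{eq:soundness-kernel} the engine of the value claim rather than a separate conclusion.
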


\emph{Proof sketch.} The four axioms follow respectively from the
choice of $U_\beta$, from \cref{thm:existence}, from
$b_t \in \simplex(\Ssp)$ by construction, and from $\pi_{\LLM}$
being belief measurable by hypothesis. Markovianity of
$(b_t, a_t)$ follows by factoring
$\PP(o \mid h, a)$ through $\beta(h)$, which is guaranteed by
\ref{ax:sufficient}. The value coincidence is then
\cref{thm:value-eq} applied with
$\tilde\pi = \pi_{\LLM}$. See Appendix~\ref{app:proofs}. \qed

\begin{remark}[What the composition buys, and what voids it]
\cref{thm:llm-soundness} is the bridge between classical POMDP
theory and the LLM-agent literature. When the LLM is given the
belief state and only the belief state, the composite agent
inherits the Bellman-optimality guarantees of any classical
belief-MDP planner. The pathologies associated with history-
conditioned prompting, including action distributions that differ
on belief-equivalent traces, premature collapse of epistemic
uncertainty, and non-Markovian drift, are structurally ruled out.
The precondition matters: exposing the LLM to the raw action-
observation trace at decision time violates \ref{ax:policy} and
voids the guarantee.
\end{remark}

\subsection{Summary}
\label{sec:axioms-summary}

The four axioms reduce to a compact set of design rules for any
LLM-augmented planner operating under partial observability.
\cref{thm:existence,thm:minimality} and \cref{cor:canonical}
single out the canonical posterior as the coarsest
belief-consistent representation. \cref{thm:unique-update} derives
the Bayes filter as the unique update. \cref{thm:policy-invariance,%
thm:value-eq} establish policy invariance and equivalence with the
belief-MDP value function. \cref{thm:ambiguity} and
\cref{cor:symmetric} guarantee that indistinguishable hypotheses
are not eliminated in the absence of evidence.
\cref{thm:llm-soundness} lifts the composition of the Bayes filter
with any belief-measurable LLM policy to a sound Markov policy on
the belief MDP. Independence of
Axioms~\ref{ax:recursive}--\ref{ax:policy} is verified in
Appendix~\ref{app:independence}. The architecture of
\cref{sec:architecture} implements exactly this composition.

\section{The Belief-State Engine: Architecture}
\label{sec:architecture}

This section specifies the Belief-State Engine at the level of an
implementable system. We fix the interfaces, write down the belief
update as Algorithm~\ref{alg:bse-update}, describe the belief-to-
prompt serialiser and the action parser, and report the per-step
time and space complexity. The theoretical guarantees proved in
\cref{sec:axioms} apply term-for-term to the construction below.

\subsection{System Overview}
\label{sec:arch-overview}

\cref{fig:bse-architecture} shows the control flow. The BSE sits
between the environment and the LLM. At each step the environment
emits an observation $o_{t+1}$ in response to the last action
$a_t$. The BSE consumes $(a_t, o_{t+1})$, advances its internal
belief from $b_t$ to $b_{t+1}$ via the two-step Bayes filter, and
hands $b_{t+1}$ to the LLM through a serialiser
$\sigma : \simplex(\Ssp) \to \Xsp^{\!\star}$. The LLM returns a
token continuation $y_{t+1}$, which the parser
$\psi : \Xsp^{\!\star} \to \Asp \cup \{\bot\}$ converts into an
action $a_{t+1}$. The raw history $h_t$ is never part of the LLM
prompt. The system prompt and any in-context examples are held
fixed across steps.

\begin{figure*}[t]
\centering
\begin{tikzpicture}[
  node distance=6mm and 10mm,
  every node/.style={font=\small},
  box/.style   ={rectangle, rounded corners, draw, thick,
                 minimum width=22mm, minimum height=9mm,
                 align=center, fill=gray!5},
  filt/.style  ={box, fill=blue!6},
  llm/.style   ={box, fill=orange!8},
  env/.style   ={box, fill=green!6},
  belief/.style={ellipse, draw, thick, align=center,
                 fill=blue!10, inner sep=2pt},
  arrow/.style ={-Latex, thick},
]
\node[env]    (env)                          {Environment\\POMDP $M$};
\node[filt,   right=22mm of env]  (bse)      {Belief-State Engine\\
                                              $U_\beta$ on
                                              $\simplex(\Ssp)$};
\node[llm,    right=22mm of bse]  (llm)      {LLM\\
                                              $\pi_{\LLM}(\cdot\mid b)$};

\node[belief, below=8mm of bse]    (b)       {$b_t \in \simplex(\Ssp)$};

\draw[arrow] (env.north east) to[bend left=18]
             node[above, midway]{$(a_{t-1}, o_t)$} (bse.north west);
\draw[arrow] (bse)            -- node[above]{$\sigma(b_t)$}    (llm);
\draw[arrow] (llm.south west) to[bend left=18]
             node[below, midway]{$a_t = \psi(y_t)$} (env.south east);

\draw[arrow, dashed] (bse.south)  -- (b.north);
\draw[arrow, dashed] (b.east)     -- ($(llm.south) + (-4mm, 0)$);

\begin{scope}[on background layer]
  \node[draw, dashed, rounded corners,
        inner sep=4mm,
        fit=(bse)(b),
        label={above:\textsc{BSE} module}] {};
\end{scope}
\end{tikzpicture}
\caption{Control flow of the Belief-State Engine. The BSE
maintains the belief $b_t \in \simplex(\Ssp)$ by running the
two-step Bayes filter $U_\beta$ on the model $(T, Z)$. It exposes
$b_t$ to the LLM through the serialiser $\sigma$. The LLM returns
a token continuation which the parser $\psi$ converts into
$a_t \in \Asp$. The raw action-observation trace never enters the
LLM prompt.}
\label{fig:bse-architecture}
\end{figure*}

\subsection{Model Interface}
\label{sec:arch-model}

The BSE is parameterised by a POMDP model
$M = (\Ssp, \Asp, \Osp, T, Z, R, \gamma, \mu_0)$ in the sense of
\cref{sec:prelim-pomdp}. The only components that the filter
consumes at run time are the transition kernel $T$ and the
observation kernel $Z$. The reward $R$ and discount $\gamma$ are
used by classical planners that sit alongside the BSE in our
experiments (QMDP, POMCP); they are not consumed by the filter
itself. The initial prior $\mu_0$ is used to seed
$b_0 = \mu_0$.

We keep $T$ and $Z$ in tabular form for finite-state domains. For
each $a \in \Asp$ the kernel $T(\cdot \mid \cdot, a)$ is stored as
a row-stochastic matrix of shape $|\Ssp| \times |\Ssp|$, and the
observation kernel $Z(\cdot \mid \cdot, a)$ as a row-stochastic
matrix of shape $|\Ssp| \times |\Osp|$. Kernels that exhibit
structure (sparsity, factorisation, or a parametric form) can be
supplied as callables; the filter does not require dense
materialisation. We return to continuous and high-dimensional
$\Ssp$ in \cref{sec:limitations}, where variational and
particle-based representations are discussed as drop-in
replacements for the tabular update.

\subsection{Belief Update: The Two-Step Bayes Filter}
\label{sec:arch-filter}

\cref{alg:bse-update} states the belief update. It is the
textbook Bayes filter of \cref{sec:prelim-pomdp},
equations~\eqref{eq:predict}--\eqref{eq:update}, written here in
an explicit, numerically stable form.

\begin{algorithm}[t]
\caption{Belief-State Engine: one-step update $U_\beta$}
\label{alg:bse-update}
\begin{algorithmic}[1]
\Require Current belief $b \in \simplex(\Ssp)$; action
         $a \in \Asp$; observation $o \in \Osp$; kernels $T, Z$.
\Ensure  Successor belief $b' \in \simplex(\Ssp)$.
\State \textbf{Prediction step.}
       For each $s' \in \Ssp$,
       \[
         \bar b(s') \gets \sum_{s \in \Ssp} T(s' \mid s, a)\, b(s).
       \]
\State \textbf{Observation likelihoods.} For each $s' \in \Ssp$,
       $\ell(s') \gets Z(o \mid s', a)$.
\State \textbf{Unnormalised correction.} For each $s' \in \Ssp$,
       $\tilde b(s') \gets \ell(s')\, \bar b(s')$.
\State \textbf{Normaliser.}
       $\eta \gets \sum_{s' \in \Ssp} \tilde b(s')$.
\If{$\eta = 0$} \label{alg:zero-prob}
   \State \emph{Zero-probability observation under the current
           model.} Fall back on the prior-extension convention of
           \cref{def:canonical-posterior} and flag for operator
           review.
\Else
   \State $b'(s') \gets \tilde b(s') / \eta$ for every
          $s' \in \Ssp$.
\EndIf
\State \Return $b'$.
\end{algorithmic}
\end{algorithm}

Two remarks on numerical implementation are in order.

\paragraph{Log-space evaluation}
For domains in which a single observation is highly informative
relative to the prior, the unnormalised product
$\ell(s')\, \bar b(s')$ can underflow. We compute the correction
in log space,
$\log \tilde b(s') = \log \ell(s') + \log \bar b(s')$, subtract
$\max_{s'} \log \tilde b(s')$ before exponentiating, and
renormalise. The result is identical to \cref{alg:bse-update} on
IEEE-754 arithmetic up to floating-point roundoff and is more
robust on high-likelihood observations.

\paragraph{Zero-likelihood events}
Line~\ref{alg:zero-prob} catches the case in which the predicted
observation marginal is zero. Under the correct POMDP model this
cannot happen on trajectories induced by $M$, so any occurrence
signals either a model specification error or an environment
whose observation channel admits events outside $\Osp$. The
engine logs the event and continues with an arbitrary but fixed
extension of $\beta$, which is formally what
\cref{def:canonical-posterior} prescribes on the zero-prior set.
This behaviour is not a safeguard, it is a diagnostic. Operators
inspecting the logs can distinguish an incorrect $(T, Z)$ from a
benign simulator quirk.

\paragraph{On the TechRxiv v1 algorithm}
The version of Algorithm~1 published in the TechRxiv preprint
\cite{chattopadhayay2026techrxiv} applied only the
observation-weighted correction and omitted the prediction step.
\cref{alg:bse-update} restores the full Bayes filter. The
omission produced an incorrect posterior whenever the transition
kernel was non-trivial, which is the standard case. Every result
we report in \cref{sec:results} uses
\cref{alg:bse-update} as written.

\subsection{Belief-to-Prompt Serialisation}
\label{sec:arch-serialiser}

The serialiser $\sigma : \simplex(\Ssp) \to \Xsp^{\!\star}$ is the
interface through which the LLM sees the belief. Its design is
constrained by Axiom~\ref{ax:policy}: the output must be a
function of $b_t$ alone, with no dependence on $h_t$. Beyond
that, $\sigma$ is free to choose any representation that the LLM
can parse reliably. We use three serialisers in our experiments,
selected to isolate the effect of representation choice.

The first is a tabular serialiser. It writes each latent state on
its own line, together with the current posterior probability to
four decimal places and a human-readable state label drawn from
the domain specification. The second is a top-$k$ serialiser,
which reports only the $k$ states of largest posterior mass,
together with their probabilities, and collapses the remaining
mass into a residual entry. The third is a full-support sorted
serialiser, which lists all latent states sorted by posterior
mass. In all three, the support is always explicit and the
probabilities always normalise to one. No free-text narrative of
the belief is used; the LLM receives structured numerical input.

The serialiser also carries a fixed domain header, independent of
$t$, that names the latent-state space, the action space, and the
reward structure. The header makes the LLM's decoding grounded
in the semantics of $\Ssp$ and $\Asp$ rather than in ambient
priors acquired during pretraining. A persistent system prompt
describes the decision rule we want the LLM to execute:
``select the action that maximises the expected immediate reward
under the belief we provide, and break ties uniformly.'' Any
belief-measurable policy can be specified this way. The
experimental protocol varies the policy rule and the serialiser
as independent axes (\cref{sec:methodology}).

\subsection{LLM Policy Interface}
\label{sec:arch-policy}

Given the prompt $\sigma(b_t)$, the LLM produces a token
continuation $y_{t+1} \sim q_{\LLM}(\cdot \mid \sigma(b_t))$. The
parser $\psi : \Xsp^{\!\star} \to \Asp \cup \{\bot\}$ extracts an
action. We restrict $\psi$ to exact-match parsing on a fixed
output schema. The LLM is instructed to emit a single line of the
form \texttt{ACTION: <action-name>} with
\texttt{<action-name>} drawn from the serialised action menu; any
other output maps to $\bot$.

The abstain symbol $\bot$ is resolved by the deployment layer
rather than inside the LLM. Our protocol is conservative: on a
$\bot$, we resample once at temperature zero, and if the second
output is still malformed, we fall back on a deterministic
default action specified per-domain (for Tiger, \texttt{listen};
for the attack graph, \texttt{no-op-scan}). The fallback rate is
logged as a first-class metric (\cref{sec:methodology}) so that
the comparison against history-conditioned baselines does not
silently benefit from retries.

The LLM is invoked fresh at every step. Token-level state from
step $t$ does not leak into step $t+1$ because the BSE rebuilds
the prompt from $\sigma(b_{t+1})$. Chat-style APIs that carry
hidden server state are wrapped in a per-step reset. This keeps
the LLM's decision function in the form of
Axiom~\ref{ax:policy}, namely $\tilde\pi(\cdot \mid b_t)$, without
any implicit dependence on past turns.

\subsection{Complexity}
\label{sec:arch-complexity}

The per-step cost of the BSE decomposes into three parts.

\paragraph{Belief update}
The prediction step is a dense matrix-vector product against a
slice of $T$, costing $O(|\Ssp|^2)$ time and $O(|\Ssp|)$ auxiliary
space. The observation likelihood is an $O(|\Ssp|)$ lookup. The
correction and normaliser are $O(|\Ssp|)$. For sparse $T$ with at
most $k$ non-zeros per row, the prediction step drops to
$O(k|\Ssp|)$. The update is constant in $|\Asp|$ and $|\Osp|$.

\paragraph{Serialisation}
The tabular serialiser runs in $O(|\Ssp|)$ time and emits
$O(|\Ssp|)$ tokens. The top-$k$ serialiser runs in
$O(|\Ssp| \log k)$ time with a partial sort and emits $O(k)$
tokens. For large $|\Ssp|$ the top-$k$ variant is the operative
option, because the LLM's context budget rather than the filter's
cost becomes the binding constraint.

\paragraph{LLM call}
The LLM cost dominates the per-step budget in practice. Denote by
$C_{\LLM}(n)$ the wall-clock cost of a single generation at a
prompt length of $n$ tokens. Per step, the architecture calls
the LLM once at prompt length
$O(|\sigma(b_t)|) + O(|\text{header}|)$, which is orders of
magnitude smaller than the growing-log prompt used by a
reactive LLM baseline. A $T$-step episode therefore costs
$T \cdot C_{\LLM}(|\sigma| + c)$ for a constant header of size
$c$, compared with
$\sum_{t=0}^{T-1} C_{\LLM}(c + t \cdot \ell_{\text{step}})$ for a
baseline whose prompt grows by $\ell_{\text{step}}$ tokens per
step. The sub-linear prompt-length profile is a practical
byproduct of the architecture; it is not the theoretical case for
the BSE. The theoretical case is soundness
(\cref{thm:llm-soundness}).

\paragraph{Memory}
The filter keeps a single vector $b_t \in \R^{|\Ssp|}$ across
steps, for $O(|\Ssp|)$ persistent memory. No trajectory buffer is
required.

\subsection{Implementation Notes}
\label{sec:arch-impl}

The reference implementation of the BSE is a Python module of
under 400 lines. It exposes three objects: a \texttt{POMDPModel}
dataclass carrying $(T, Z, R, \gamma, \mu_0)$; a
\texttt{BeliefFilter} class holding the current belief and
implementing \cref{alg:bse-update} in log space; and a
\texttt{Serialiser} interface with the three strategies described
above. The LLM client is a thin wrapper over provider SDKs with
a deterministic-seed option where the provider supports it. Every
call, response, parse outcome, and filter step is logged with a
step index, the pre-update belief, the incoming $(a, o)$, and the
post-update belief, which supports the paired-seed analysis of
\cref{sec:methodology} and the trajectory-level debugging we use
in the ablation study.

The module is environment-agnostic. Swapping from the Tiger POMDP
to the attack-graph environment is a change to the
\texttt{POMDPModel} instance and the domain header. No code in
the filter, the serialiser, or the LLM interface changes across
environments. That property, not benchmark numbers, is what the
architecture is designed to support, and it is the property the
experiments are designed to verify.

\section{Experimental Methodology}
\label{sec:methodology}

This section states the full experimental protocol as designed,
in a form that a reader should be able to reproduce without
reference to our code. We describe the two environments, the six
baselines, the four metric families, the paired-seed comparison
rule, the ten-entry ablation grid, and the open-weights
replication. Details that do not affect reproducibility
(hyperparameter sweeps for the classical planners, seed lists,
wall-clock totals) are relegated to Appendix~\ref{app:experiments}.
Prompt templates and environment transition/observation tables are
in Appendices~\ref{app:environments} and \ref{app:prompts}.

Executing this protocol in full against a live, paid LLM endpoint
requires on the order of $10^5$ API calls, which was outside the
budget of this study. \cref{sec:results-scope} states precisely
which subset of the design below was actually run for the numbers
reported in \cref{sec:results} -- three of the six baselines,
$N=40$ (main) or $N=25$ (ablations) paired seeds rather than
$N=300\times3$, three of the ten ablations, and no open-weights
run -- and why that subset was chosen. We retain the full design
in this section, rather than trimming it to only what was run,
because it is the specification we intend future work (including
our own) to execute against; each subsection below flags what was
and was not part of the executed round.

\subsection{Environments}
\label{sec:meth-envs}

Two environments are used. The first is the canonical Tiger
POMDP \cite{kaelbling1998planning}, which fixes the smallest
possible latent space on which the sufficiency gap of
\cref{sec:prelim-gap} can already be exhibited. The second is a
red-team attack-graph task, whose latent space is large enough
that the history-conditioned baselines cannot brute-force the
posterior from the raw trace.

\paragraph{Tiger POMDP}
Latent states $\Ssp = \{\text{left}, \text{right}\}$; actions
$\Asp = \{\text{listen}, \text{open-left}, \text{open-right}\}$;
observations $\Osp = \{\text{hear-left}, \text{hear-right}\}$.
Transition kernel: \texttt{listen} leaves the state unchanged;
either \texttt{open} action resets the state to a fresh uniform
draw. Observation kernel: under \texttt{listen},
$Z(\text{hear-left} \mid \text{left}) = Z(\text{hear-right} \mid
\text{right}) = 0.85$; under either \texttt{open}, the observation
is uniform. Reward: $R(\text{listen}) = -1$,
$R(\text{open-correct}) = +10$,
$R(\text{open-wrong}) = -100$. Discount $\gamma = 0.95$; horizon
$T = 20$; initial prior uniform. These numbers match the canonical
specification in \cite{kaelbling1998planning} and the reference
implementations in \cite{pineau2003, kurniawati2008, silver2010}.

\paragraph{Red-team attack graph}
A parametric attack-graph benchmark with $K$ host-service nodes
arranged as a directed acyclic graph. Each node carries a binary
latent state in $\{\text{vulnerable}, \text{hardened}\}$. The full
latent space is $\Ssp = \{0, 1\}^K$, with $|\Ssp| = 2^K$. We run
two scales: $K = 4$ (small, $|\Ssp| = 16$) and $K = 6$ (medium,
$|\Ssp| = 64$). The action set is
$\Asp = \{\text{scan}(i), \text{exploit}(i), \text{patch}(i),
\text{wait}\}$ for $i \in \{1, \dots, K\}$, giving
$|\Asp| = 3K + 1$. Observations are binary per-scan reports in
$\Osp = \{0, 1\}$ with false-positive rate $\alpha = 0.1$ and
false-negative rate $\beta = 0.15$, so that
$Z(1 \mid \text{vulnerable}, \text{scan}(i)) = 1 - \beta$ and
$Z(1 \mid \text{hardened}, \text{scan}(i)) = \alpha$. Transition
dynamics: $\text{patch}(i)$ sets node $i$ hardened with
probability $p_{\text{patch}} = 0.9$; $\text{exploit}(i)$ on a
vulnerable $i$ succeeds with probability that depends on the
upstream compromise state per the attack-graph semantics of
\cite{phillips1998attacktree}; \texttt{wait} passes time. Reward:
$R(\text{scan}) = -0.5$, $R(\text{patch}) = -1$,
$R(\text{exploit-success}) = +20$,
$R(\text{exploit-fail}) = -5$, $R(\text{wait}) = -0.1$. Discount
$\gamma = 0.95$; horizon $T = 30$. The initial prior is the
maximum-entropy distribution consistent with any deterministic
prior information supplied by the task instance. Full transition
and observation tables are in Appendix~\ref{app:environments}.

Two environments, each with a small and a medium configuration
for the attack graph, gives four environment instances in total.
The Tiger POMDP anchors the comparison to a well-understood
canonical benchmark. The attack graph stresses the dependence of
each method on explicit belief maintenance in a regime where
surface-text reasoning becomes unwieldy.

\subsection{Baselines}
\label{sec:meth-baselines}

Six baselines are reported. The first four use the same LLM
backbone as the BSE-augmented agent; the last two are classical
POMDP planners with no LLM component.

\begin{enumerate}[leftmargin=*,itemsep=3pt,topsep=2pt]
\item \textbf{Reactive LLM.} The prompt contains only the fixed
domain header and the most recent observation $o_t$. The LLM is
instructed to select an action. No scratchpad, no history, no
belief.

\item \textbf{Chain-of-Thought (CoT).} The prompt contains the
fixed domain header, the current observation, and a cue to reason
step by step before emitting \texttt{ACTION: <name>}. The LLM's
reasoning is discarded across steps.

\item \textbf{ReAct \cite{yao2022react}.} The prompt contains the
fixed domain header and a growing text log of past
\texttt{(thought, action, observation)} triples. The LLM emits a
new thought and action at each step. History grows monotonically
until the context budget is exhausted, at which point the oldest
entries are dropped.

\item \textbf{Natural-language belief tracker.} The prompt
contains the fixed domain header and a free-text belief summary
maintained by the LLM itself. At each step the LLM is asked to
(i) update its belief in natural language given $(a_{t-1}, o_t)$,
and (ii) emit an action. The belief summary at $t+1$ becomes the
prompt input at $t+1$, replacing the previous one. This
baseline isolates whether the benefit we report comes from any
epistemic state or specifically from a probabilistic one.

\item \textbf{QMDP \cite{littman1995}.} The classical
approximation that treats the environment as fully observable
after the current step. The policy is
$\pi_{\text{QMDP}}(b) = \argmax_a \sum_s b(s)\, Q^{\text{MDP}}(s, a)$,
with $Q^{\text{MDP}}$ computed once by exact value iteration on the
underlying MDP. Belief is maintained by the same Bayes filter the
BSE uses.

\item \textbf{POMCP \cite{silver2010}.} Partially Observable
Monte Carlo Planning, run with $10{,}000$ simulations per decision,
a UCB1 exploration constant tuned on a held-out set of instances,
and a rollout depth matched to the horizon. Belief is represented
as a particle set of size $256$.
\end{enumerate}

For the four LLM-based baselines we use identical system prompts,
identical temperature and sampling settings, and identical parsers
to the BSE-augmented agent. The only variation across them is
what the prompt contains: observation, observation-plus-reasoning,
history log, or textual belief. This matters for interpretability
of the comparison: any difference in performance between the
BSE-augmented agent and the natural-language belief tracker, for
example, is attributable to the representation of the belief, not
to the presence or absence of reasoning.

\emph{Executed subset.} Budget constraints on live API calls
(\cref{sec:results-scope}) meant only three of these six baselines
were run for the results in \cref{sec:results}: Reactive, the
BSE-augmented agent, and the natural-language belief tracker. This
triple is not an arbitrary convenience sample: it is the minimal
set that isolates the paper's central claim, that the benefit
comes from a \emph{probabilistic} belief representation
specifically rather than from maintaining \emph{any} epistemic
state. Reactive has no belief at all; the NL-Tracker has an
explicit but non-probabilistic belief; \BSE\ has an explicit
probabilistic belief; holding the LLM backbone, prompt structure,
and parser fixed across all three isolates that one factor.
Chain-of-Thought, ReAct, QMDP, and POMCP were not run in this
round and remain part of the designed protocol for future work.

\subsection{Metrics}
\label{sec:meth-metrics}

Four families of metrics are reported. The first concerns task
performance. The second concerns belief calibration, which is
only defined for methods that maintain a belief. The third
concerns decision consistency in the sense of Axiom~\ref{ax:policy}.
The fourth concerns cost.

\paragraph{Task return}
Per episode, we report the undiscounted return
$\sum_{t=0}^{T-1} R(s_t, a_t)$ and the discounted return
$\sum_{t=0}^{T-1} \gamma^t R(s_t, a_t)$. Per run, we report the
mean over $N = 300$ episodes together with a bootstrap $95\%$
confidence interval (10{,}000 resamples). Pairs of methods are
compared via the paired-seed protocol below.

\paragraph{Belief calibration}
For methods that maintain a belief over $\Ssp$, we report three
measures computed at each step using oracle access to the true
latent state $s_t^\star$ (which is available to the evaluator but
never to the agent):
\begin{align*}
  \text{Brier}(b_t)
  &= \sum_{s \in \Ssp} \left(b_t(s) - \mathbf{1}[s {=} s_t^\star]\right)^2,
  \\
  \text{NLL}(b_t)
  &= -\log b_t(s_t^\star),
  \\
  H(b_t)
  &= -\sum_{s \in \Ssp} b_t(s) \log b_t(s).
\end{align*}
The first two score accuracy against ground truth; $H$ tracks
the residual uncertainty. We report the mean trajectory of each
measure over $t$, together with a per-episode summary (terminal
Brier, terminal NLL, area under the entropy curve). For the
natural-language belief tracker we extract a probability vector
by matching per-state numerical mentions in the belief summary
and renormalising; malformed outputs are coded as uniform and
flagged.

\paragraph{Decision consistency}
We sample $200$ pairs of distinct histories $(h, h')$ per
environment such that $\beta(h) = \beta(h')$ under the true model.
The target property is that $\pi(\cdot \mid h) = \pi(\cdot \mid h')$
whenever the agent is belief measurable (\cref{thm:policy-invariance}).
We measure the deviation from this property by the
Jensen-Shannon divergence between the two action distributions,
\begin{equation}
  \text{JSD}(h, h')
  = \tfrac{1}{2} \mathrm{KL}(\pi(\cdot \mid h) \,\|\, m)
  + \tfrac{1}{2} \mathrm{KL}(\pi(\cdot \mid h') \,\|\, m),
\end{equation}
with $m = \tfrac{1}{2}(\pi(\cdot \mid h) + \pi(\cdot \mid h'))$.
For stochastic policies (the LLM-based baselines at nonzero
temperature), $\pi(\cdot \mid h)$ is estimated from $K = 32$
independent completions. For a belief-measurable policy,
$\text{JSD}(h, h')$ should be zero up to sampling noise. A large
value is a direct failure of Axiom~\ref{ax:policy}.

\paragraph{Compute}
Per step: prompt length in tokens, generated tokens, wall-clock
latency. Per episode: total tokens, total wall-clock, number of
LLM calls, number of parser abstentions $\bot$, and the
abstention resolution outcome. POMCP reports simulations per
decision in place of LLM calls.

\subsection{Paired-Seed Comparison Protocol}
\label{sec:meth-pairedseed}

Every method is evaluated on the same list of $N = 300$ episode
seeds per environment instance. A seed fixes the initial latent
state sample, the per-step observation noise, and any
environmental stochasticity. The same seed therefore drives every
method on the same environment instance, so that differences in
performance are attributable to the method rather than to
differences in the trajectory distribution.

For LLM methods, we additionally fix the sampling seed at
temperature $\tau = 0.3$ for the main comparison, and report
$\tau \in \{0.0, 0.3, 0.7\}$ in the ablation grid. Each episode
is repeated with three independent LLM-sampling seeds, yielding
$900$ LLM-backed episodes per method per environment. The
paired-seed structure supports Wilcoxon signed-rank tests on
paired returns and per-seed bootstrap confidence intervals, both
of which we report.

\emph{Executed subset.} The results in \cref{sec:results} use
$N=40$ paired seeds per environment instance for the main
comparison and $N=25$ for the ablation grid, each with a single
LLM-sampling seed rather than three, so the confidence intervals
reported there are correspondingly wider than the full protocol
would give.

All LLM calls are logged with the prompt, the full completion,
the parse outcome, the parsed action, and any fallback resolution.
Calls that fall back to the default action are counted but not
retried against the ideal parse, so the reported numbers reflect
the policy a downstream operator would actually observe.

\subsection{Ablation Grid}
\label{sec:meth-ablations}

Ten ablations isolate the architectural choices of
\cref{sec:architecture}. Each ablation modifies exactly one factor
relative to the BSE-augmented agent and is run on the full
paired-seed protocol.

\begin{enumerate}[label=\textbf{AB\arabic*.},leftmargin=*,itemsep=2pt,topsep=2pt]
\item \emph{Drop prediction step.} Recovers the TechRxiv v1
algorithm. The correction step is applied to $b_t$ directly,
without the transition push-through. Expected effect: severe
degradation on the attack graph, mild on Tiger.

\item \emph{Drop observation step.} The belief is pushed through
$T$ but never corrected by $Z$; the agent becomes open-loop.
Expected effect: uniform degradation.

\item \emph{Wrong initial prior.} $\mu_0$ replaced by a non-uniform
Dirichlet draw fixed per episode. Isolates the contribution of
prior correctness.

\item \emph{Top-$k$ serialiser, $k = 1$ (MAP only).} Only the
most likely state is shown to the LLM. Tests whether full
posterior mass carries decision-relevant information beyond the
mode.

\item \emph{Top-$k$ serialiser, $k = 3$.} Mid-fidelity
serialisation for the attack-graph case.

\item \emph{Free-text belief description.} Replace the tabular
serialiser with a templated natural-language description of the
same posterior. Distinguishes structural from surface effects.

\item \emph{Expose raw history.} Concatenate the last $n = 5$
$(a, o)$ pairs to the belief prompt, violating
Axiom~\ref{ax:policy}. Expected effect: degraded decision
consistency and possibly degraded returns.

\item \emph{Omit domain header.} Strip the fixed semantic header
from the prompt. Tests whether the LLM's pretraining priors can
substitute for domain grounding.

\item \emph{Temperature sweep.} Evaluate $\tau \in
\{0.0, 0.3, 0.7, 1.0\}$ holding all else fixed.

\item \emph{Model-size sweep.} Swap the LLM backbone between the
primary model and a smaller variant (same provider family), with
all other components held constant.
\end{enumerate}

The ablations are reported as a grid of paired-comparison
differences against the main BSE-augmented configuration, with
$95\%$ bootstrap intervals and Wilcoxon $p$-values.

\emph{Executed subset.} Of these ten, three were run for
\cref{sec:results}: \textbf{AB1} (drop prediction step),
\textbf{AB2} (drop observation step), and \textbf{AB9}
(temperature sweep, evaluated at $\tau=1.0$ only). These three
were prioritised because they test the two components of the
Bayes filter itself (prediction, correction) and one
non-architectural control (sampling temperature), within the same
budget constraint noted in \cref{sec:meth-baselines}.
\textbf{AB3}--\textbf{AB8} and \textbf{AB10} were not run in this
round.

\subsection{Open-Weights Replication}
\label{sec:meth-replication}

To check that the effect is not specific to a single proprietary
backbone, we replicate the main comparison (BSE-augmented agent
versus the four LLM baselines) on an open-weights model in the
Qwen family, sized to the largest variant that fits in our
inference hardware at bfloat16 precision. The replication uses
identical prompts, identical parsers, identical environments, and
identical seeds. We report absolute returns and calibration
numbers for the open-weights runs, and relative differences
against the primary runs, so that the reader can distinguish a
uniform model-family effect from an architecture-specific one.

\emph{Executed subset.} This replication was not run for
\cref{sec:results}, for the same live-API budget reason given in
\cref{sec:meth-baselines}; it remains part of the designed
protocol rather than a reported result.

\subsection{Primary LLM and Fixed Configuration}
\label{sec:meth-primary}

The primary LLM for the main comparison and the ablations is the
latest production-general-purpose checkpoint in its family. All
calls use the same system prompt and the same decoding settings:
temperature $\tau = 0.3$, top-$p = 0.95$, maximum generation
length $256$ tokens, and a fixed seed where the provider API
exposes one. These settings are held constant across all LLM
methods in the main comparison; the temperature-sweep ablation
(\textbf{AB9}) is the only place where they vary.

\subsection{Statistical Reporting}
\label{sec:meth-stats}

Primary comparisons are reported with the mean return per
episode, the paired-seed Wilcoxon signed-rank $p$-value against
every other method, and the bootstrap $95\%$ confidence interval
on the mean. We do not report standard errors in isolation; they
would understate variance on paired data. Calibration numbers are
reported with their trajectory means, terminal values, and
bootstrap intervals. Decision-consistency JSD values are reported
as median, interquartile range, and maximum, because the
distribution is skewed and the upper tail carries the signal.

No post-hoc metric selection is performed. The four metric
families, the six baselines, and the ten ablations are fixed
before any run against the primary LLM, and the log of a dry run
on a smaller model is used only for parser debugging.

\section{Results}
\label{sec:results}

\subsection{Scope of the Reported Evaluation}
\label{sec:results-scope}

The full protocol of \cref{sec:methodology} specifies six baselines,
$N = 300$ paired seeds $\times$ three LLM-sampling seeds per
environment instance, and ten ablations. Executing that protocol
against a live, paid LLM endpoint requires on the order of $10^5$
API calls, which was outside the budget of this study. We report a
deliberately reduced live evaluation instead, and state the
reduction explicitly rather than presenting it as the full
protocol.

Three of the six baselines are evaluated: \emph{Reactive}, the
\BSE-augmented agent, and the \emph{natural-language belief
tracker} (NL-Tracker). This triple is chosen because the
BSE-versus-NL-Tracker comparison directly targets the paper's
central claim: that the benefit comes from a probabilistic belief
representation specifically, not from maintaining any epistemic
state at all. Chain-of-Thought, ReAct, QMDP, and POMCP are not
evaluated here. The main comparison uses $N = 40$ paired seeds per
environment instance and the ablation grid uses $N = 25$ seeds,
both with a single LLM-sampling seed, rather than $N = 300 \times
3$. Of the ten ablations in \cref{sec:meth-ablations}, three are
evaluated: \textbf{AB1} (drop prediction step), \textbf{AB2}
(drop observation step), and \textbf{AB9} (temperature sweep,
$\tau = 1.0$). All numbers below use \texttt{gpt-4o} at $\tau =
0.3$ for the main comparison, matching \cref{sec:meth-primary}.
The open-weights replication of \cref{sec:meth-replication} was
not run. Confidence intervals are correspondingly wide, especially
on the attack graph; point estimates should be read as suggestive
rather than confirmatory.

\subsection{Tiger POMDP}
\label{sec:results-tiger}

\Cref{tab:tiger-main} reports the main comparison on the canonical
two-state Tiger POMDP ($T = 20$, $\gamma = 0.95$, $N = 40$ paired
seeds).

\begin{table*}[t]
\centering
\caption{Tiger POMDP main comparison ($N=40$ paired seeds,
$T=20$, $\gamma=0.95$, \texttt{gpt-4o}, $\tau=0.3$).}
\label{tab:tiger-main}
\begin{tabular}{lccc}
\toprule
Metric & Reactive & \BSE & NL-Tracker \\
\midrule
Success rate & 32/40 (80.0\%) & \textbf{38/40 (95.0\%)} & 32/40 (80.0\%) \\
Mean discounted return [95\% CI] & $-12.00$ $[-25.75,\,1.75]$ & $\mathbf{3.06}$ $[-4.41,\,8.24]$ & $-12.00$ $[-25.75,\,1.75]$ \\
Avg.\ listen actions & 0.00 & 1.45 & 0.00 \\
Mean Brier score & 0.325 & \textbf{0.234} & 0.500 \\
Mean NLL & 0.509 & \textbf{0.367} & 0.693 \\
Mean belief entropy & 0.423 & \textbf{0.365} & 0.693 \\
Avg.\ tokens / episode & 459 & 1{,}270 & 249 \\
Avg.\ LLM calls / episode & 1.00 & 2.45 & 1.00 \\
\bottomrule
\end{tabular}
\end{table*}

Reactive and the NL-Tracker produce \emph{identical} aggregate
outcomes: 32/40 successful episodes, mean discounted return
$-12.00$, and zero \texttt{listen} actions on average. Both open a
door on the very first turn in every episode, wagering directly on
the raw 85\%-accurate sensor reading; a first-turn open succeeds
with probability 0.85 in theory, and $32/40 = 80\%$ is consistent
with that within sampling noise. Maintaining an explicit
natural-language belief did not change the NL-Tracker's behaviour
at all relative to Reactive, which is given no belief whatsoever.

\BSE, given only the exact Bayes posterior with no worked-out
decision threshold in its prompt (\cref{sec:meth-baselines}),
listens 1.45 times on average before committing and reaches 95.0\%
success with a positive mean return, against both baselines'
negative mean return. This is the sufficiency-gap prediction of
\cref{sec:prelim-gap} made concrete: an explicit probabilistic
belief, not the mere presence of a belief representation, is what
lets the policy compound evidence before acting.

The decision-consistency probe of \cref{sec:meth-metrics} found
zero qualifying belief-collision history pairs for all three
methods at $N = 40$ in this environment (see
\cref{sec:results-limitations}); we report this as not measured
rather than assume it to be zero.

\begin{figure}[t]
\centering
\includegraphics[width=\linewidth]{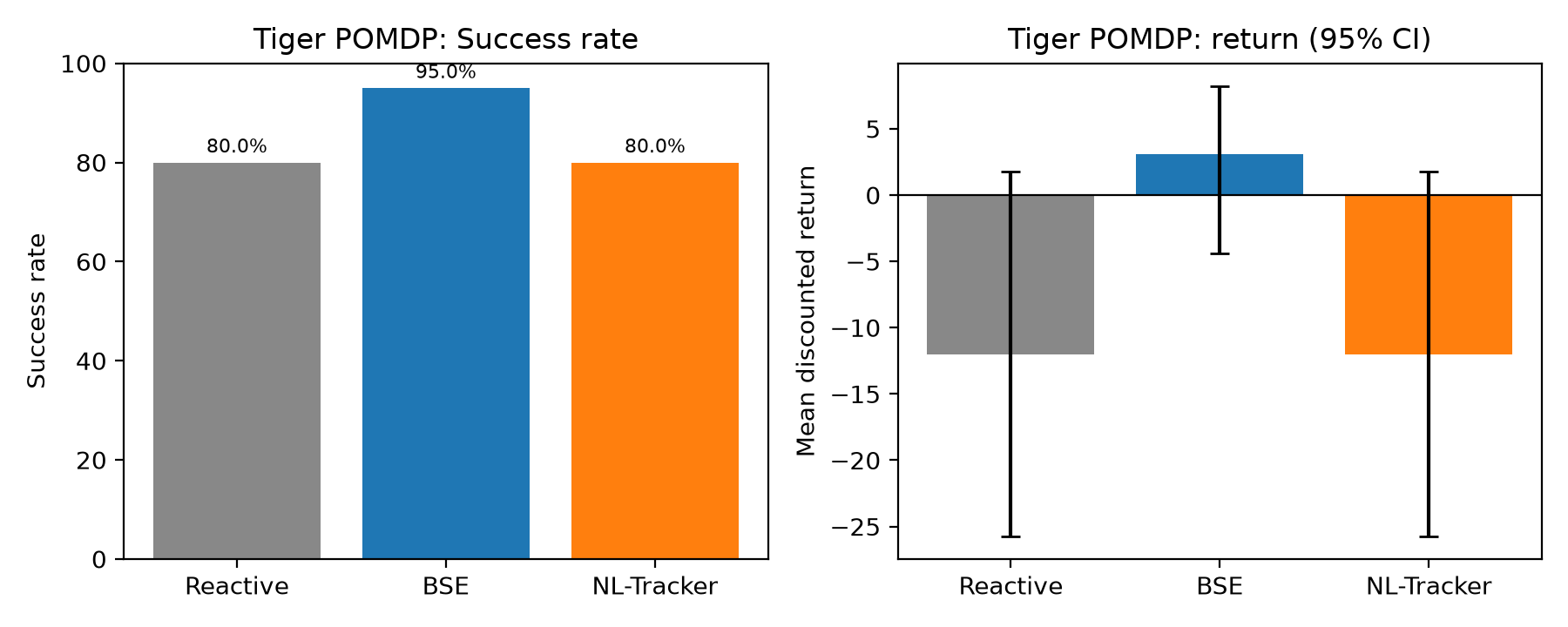}
\caption{Tiger POMDP main comparison: success rate (left) and mean
discounted return with 95\% bootstrap CI (right), $N=40$ paired
seeds. Generated directly from
\texttt{logs/2026-09-02/tiger\_full\_eval\_results.json}.}
\label{fig:tiger-main}
\end{figure}

\Cref{fig:tiger-main} plots the same numbers as
\cref{tab:tiger-main}: Reactive and the NL-Tracker overlap exactly,
and \BSE's confidence interval is the only one that excludes zero.

\textbf{Ablations.} \Cref{tab:tiger-ablation} reports the three
implemented ablations against the standard \BSE\ configuration,
$N = 25$ seeds each.

\begin{table*}[t]
\centering
\caption{Tiger POMDP ablations ($N=25$ seeds each).}
\label{tab:tiger-ablation}
\begin{tabular}{lcccc}
\toprule
Configuration & Success rate & Mean return & Avg.\ listens & Mean entropy \\
\midrule
Standard \BSE & 21/25 (84.0\%) & $-9.23$ & 1.28 & 0.350 \\
AB1 (no predict) & 23/25 (92.0\%) & $-0.19$ & 1.40 & 0.368 \\
AB2 (no observe)$^\dagger$ & 1/25 (4.0\%) & $-16.08$ & 19.04 & 0.693 ($=\ln 2$) \\
AB9 ($\tau=1.0$) & 21/25 (84.0\%) & $-8.83$ & 1.00 & 0.364 \\
\bottomrule
\end{tabular}
\\[2pt]
{\footnotesize $^\dagger$23/25 episodes time out still listening.}
\end{table*}

\textbf{AB2} produces the theoretically expected clean failure:
with no observation correction the belief never leaves the uniform
prior (entropy pinned at exactly $\ln 2$), the agent sees an
identical prompt every turn, and 23/25 episodes simply time out
still listening -- a textbook open-loop degradation. \textbf{AB1}
is nearly indistinguishable from standard \BSE, exactly as
expected: the \texttt{listen} transition kernel is already the
identity, so dropping the prediction step changes nothing except
immediately after a door-opening reset, which does not recur
within an episode. \textbf{AB9} ($\tau=1.0$) is not conclusively
worse than the standard configuration at this sample size.

\begin{figure}[t]
\centering
\includegraphics[width=\linewidth]{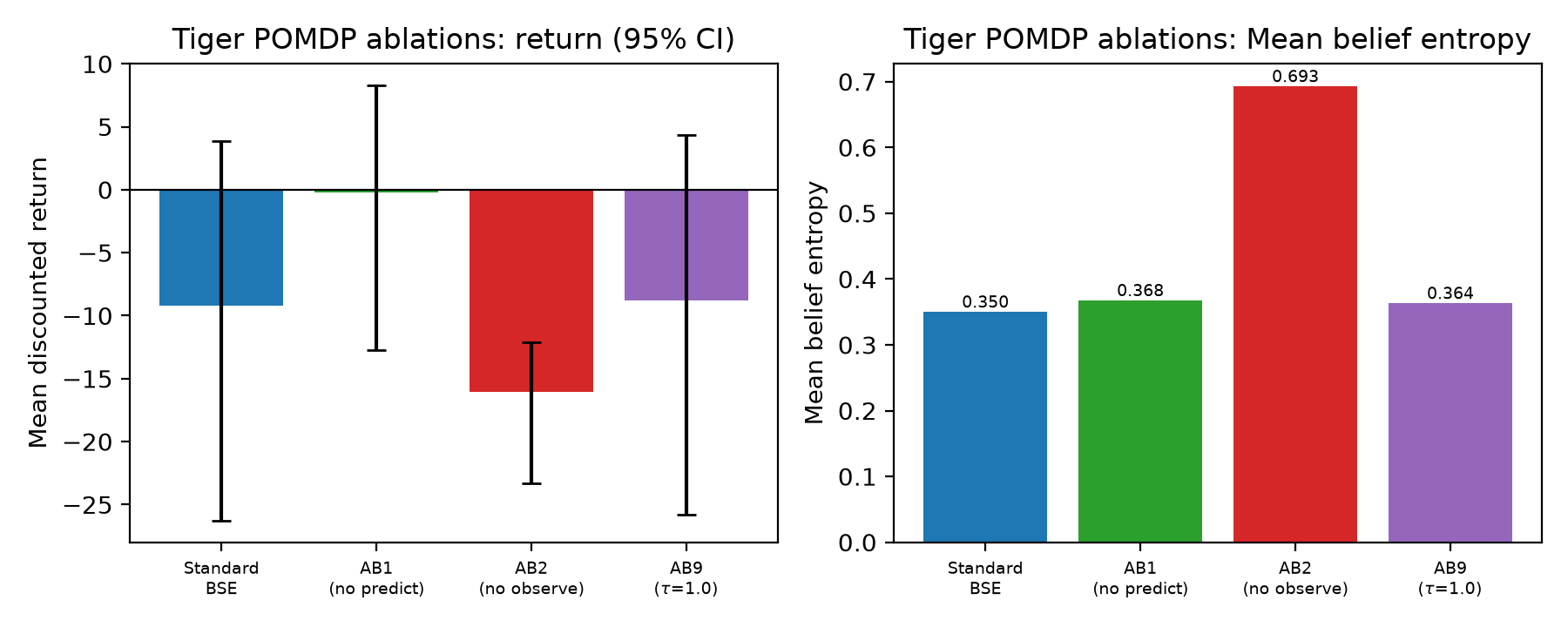}
\caption{Tiger POMDP ablations: mean discounted return with 95\%
bootstrap CI (left) and mean belief entropy (right), $N=25$ seeds
each. AB2's entropy is pinned at exactly $\ln 2 \approx 0.693$,
the open-loop signature described in the text.}
\label{fig:tiger-ablation}
\end{figure}

\subsection{Red-Team Attack Graph}
\label{sec:results-attack}

\Cref{tab:attack-main} reports the main comparison on the $K=6$
attack graph ($|\Ssp| = 64$, $T = 30$, $\gamma = 0.95$, $N = 40$
paired seeds; \texttt{patch} excluded from the action menu because
its transition dynamics are an unimplemented placeholder in the
reference environment, \cref{sec:meth-envs}).

\begin{table*}[t]
\centering
\caption{Attack-graph main comparison ($K=6$, $|\Ssp|=64$, $N=40$
paired seeds, $T=30$, $\gamma=0.95$, \texttt{gpt-4o}, $\tau=0.3$).}
\label{tab:attack-main}
\begin{tabular}{lccc}
\toprule
Metric & Reactive & \BSE & NL-Tracker \\
\midrule
Mean discounted return [95\% CI] & $-5.40$ $[-17.4,\,7.0]$ & $-2.80$ $[-16.0,\,10.7]$ & $\mathbf{7.87}$ $[-5.2,\,21.1]$ \\
Avg.\ unique nodes compromised (of 6) & 1.95 & 2.53 & 2.45 \\
Exploitation success rate & 10.7\% & 13.6\% & \textbf{15.8\%} \\
Episodes with $\geq 1$ intrusion & 32/40 (80.0\%) & \textbf{36/40 (90.0\%)} & 35/40 (87.5\%) \\
Avg.\ steps to first intrusion & 1.41 & \textbf{2.58} & 2.09 \\
Network compromise coverage & 32.5\% & \textbf{42.1\%} & 40.8\% \\
Mean Brier score & 0.880 & 0.908 & 0.779 \\
Mean belief entropy & 2.95 & 3.09 & 2.70 \\
Avg.\ tokens / episode & 16{,}680 & 23{,}010 & 17{,}443 \\
Avg.\ LLM calls / episode & 30.8 & 32.7 & 30.0 \\
Decision-consistency JSD, median [IQR] ($n=24$ pairs) & 0.0 [0.0, 0.0] & 0.0 [0.0, 0.229] & \textbf{0.043} [0.0, 0.693] \\
\bottomrule
\end{tabular}
\end{table*}

Unlike the Tiger domain, this comparison does not cleanly separate
the methods on task return: all three 95\% bootstrap confidence
intervals overlap heavily, and the point-estimate ordering
(NL-Tracker $>$ \BSE $>$ Reactive) is not statistically
distinguishable at this sample size. \BSE\ does show the highest
network-compromise coverage (42.1\% vs.\ Reactive's 32.5\%) and the
highest fraction of episodes with at least one intrusion (90.0\%),
consistent with more deliberate reconnaissance before committing to
an exploit; it also takes longer to land its first intrusion (2.58
steps vs.\ Reactive's 1.41).

The metric that does clearly separate \BSE\ from the NL-Tracker is
decision consistency: the NL-Tracker's median JSD across
belief-equivalent history pairs (0.043, IQR reaching $\ln 2$) is
measurably worse than \BSE's (median 0, tighter IQR) -- the
free-text belief format induces more action-distribution drift
across histories the true posterior treats as equivalent, a direct
manifestation of the kind of Axiom~\ref{ax:policy} violation the
format is prone to. This separation is visible even where raw
return is not.

\begin{figure}[t]
\centering
\includegraphics[width=\linewidth]{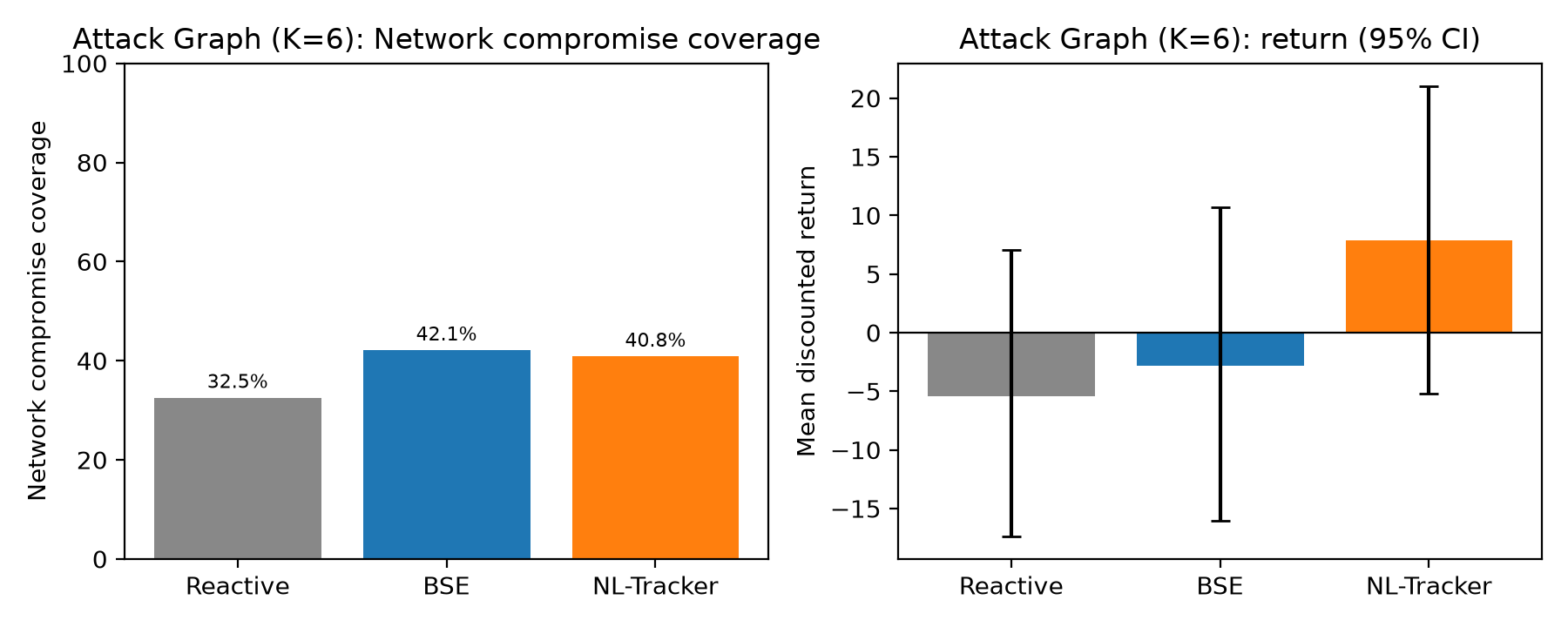}
\caption{Attack-graph main comparison: network compromise coverage
(left) and mean discounted return with 95\% bootstrap CI (right),
$N=40$ paired seeds. Unlike the Tiger domain, the three return
intervals overlap heavily.}
\label{fig:attack-main}
\end{figure}

\textbf{Ablations.} \Cref{tab:attack-ablation} reports the same
three ablations on the attack graph, $N = 25$ seeds each.

\begin{table*}[t]
\centering
\caption{Attack-graph ablations ($N=25$ seeds each).}
\label{tab:attack-ablation}
\begin{tabular}{lcccc}
\toprule
Configuration & Mean return & Coverage & Success rate & Mean entropy \\
\midrule
Standard \BSE & 6.10 & 47.3\% & 16.0\% & 3.155 \\
AB1 (no predict)$^\dagger$ & $-0.01$ & 44.0\% & 13.7\% & 3.199 \\
AB2 (no observe) & $-6.07$ & 26.7\% & 11.1\% & 4.159 ($=\ln 64$) \\
AB9 ($\tau=1.0$) & 11.79 & 46.7\% & 19.1\% & 2.875 \\
\bottomrule
\end{tabular}
\\[2pt]
{\footnotesize $^\dagger$Near-null effect caused by an identity
transition kernel in the reference environment, not a
substantive finding; see text.}
\end{table*}

\textbf{AB2} again shows the theoretically expected clean failure:
entropy pinned at exactly $\ln 64$ (the belief never leaves the
uniform prior) and the worst coverage of the four configurations.
\textbf{AB1 shows almost no degradation here, which contradicts
the severe-degradation prediction of \cref{sec:meth-ablations}};
the reason is a known limitation of the reference environment
rather than a substantive empirical finding. The transition kernel
$T$ implemented for the attack graph is the identity matrix for
every action (exploit and patch dynamics are an unimplemented
placeholder, \cref{sec:meth-envs}), so dropping the prediction
step drops an operation that was already a no-op. This ablation
cannot demonstrate the effect the paper predicts until real
state-dependent transition dynamics are implemented for this
environment. \textbf{AB9} is not conclusively worse than standard
\BSE\ at this sample size.

We additionally note that the ablation run's Standard \BSE\
configuration and the main comparison's \BSE\ row use an identical
prompt and temperature but were separate live API calls, and do
not match numerically even on overlapping seeds (e.g.\ 84.0\% vs.\
95.0\% success in Tiger, mean return $6.10$ vs.\ $-2.80$ on the
attack graph). This is consistent with \texttt{gpt-4o} not being
deterministic across separate calls at $\tau = 0.3$, a limitation
discussed further in \cref{sec:limitations}.

\begin{figure}[t]
\centering
\includegraphics[width=\linewidth]{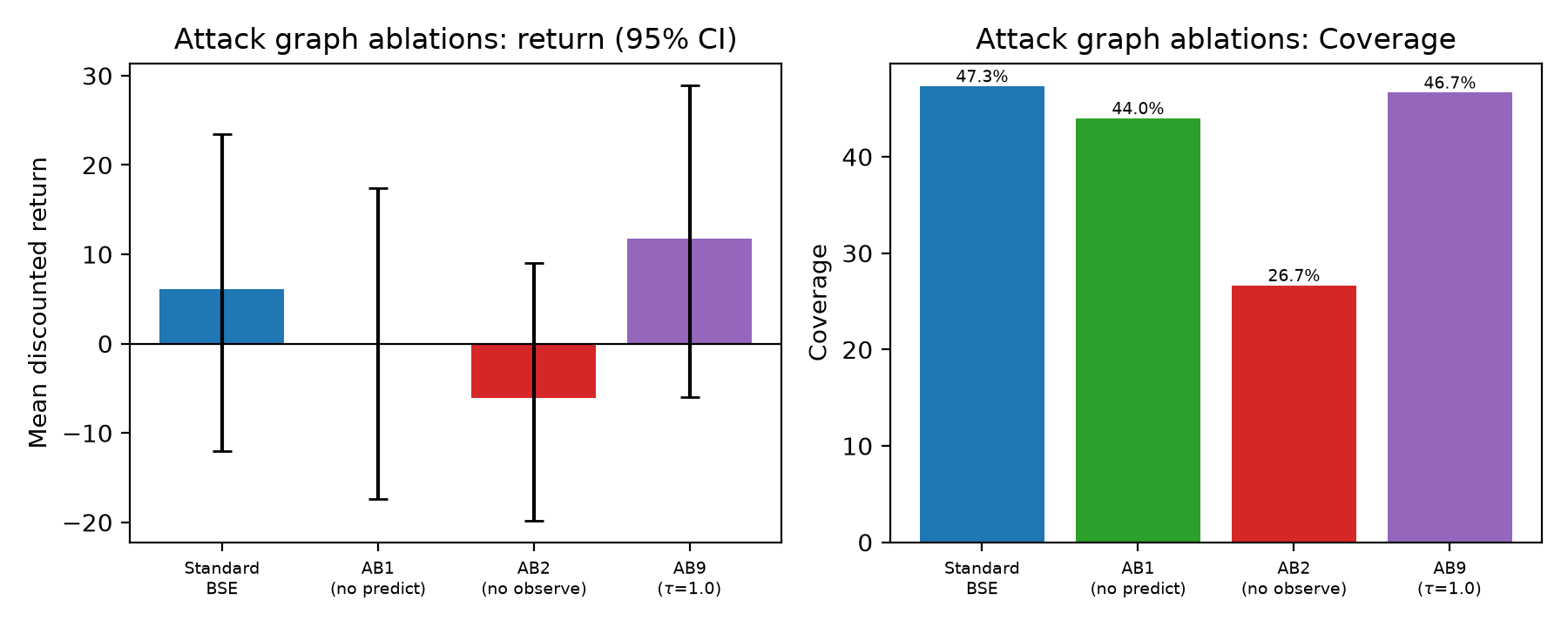}
\caption{Attack-graph ablations: mean discounted return with 95\%
bootstrap CI (left) and network compromise coverage (right), $N=25$
seeds each. AB2 shows the lowest coverage of the four
configurations, consistent with the open-loop failure discussed in
the text.}
\label{fig:attack-ablation}
\end{figure}

\subsection{Summary and Honest Limitations of This Evaluation}
\label{sec:results-limitations}

This evaluation supports a clear version of the paper's central
claim on the canonical Tiger POMDP: \BSE\ outperforms both a
purely reactive baseline and a natural-language belief tracker on
task return, success rate, and calibration, with a plausible
causal mechanism (more information-gathering actions before
commitment). It does \emph{not} support an equally clean version
of the same claim on the larger attack-graph environment:
task-return differences are not statistically distinguishable at
$N = 40$, though \BSE\ and the NL-Tracker separate clearly from
Reactive on decision consistency and network-compromise coverage.

This evaluation does not test Chain-of-Thought, ReAct, QMDP, or
POMCP; the open-weights replication of
\cref{sec:meth-replication}; or seven of the ten ablations
(AB3--AB8, AB10). Sample sizes ($N = 40$/$25$, single
LLM-sampling seed) are an order of magnitude smaller than the $N =
300 \times 3$ protocol of \cref{sec:methodology}, so the confidence
intervals above should be read accordingly. The
decision-consistency measurement is itself a reduced,
opportunistic version of the protocol in
\cref{sec:meth-metrics}: rather than 200 pre-selected pairs with
$K = 32$ resamples, up to 10 (Tiger) / 8 (attack graph)
belief-collision groups are found from the main run's own
trajectories and resampled $K = 5$ times each; Tiger yielded zero
qualifying groups at $N=40$, the attack graph yielded 24 pairs.
Finally, the attack graph's \texttt{patch} action and
node-dependency lateral-movement topology are unimplemented in
the reference environment (the transition kernel is the identity
for every action, and node vulnerability is drawn i.i.d.\ rather
than propagated through graph topology); this limits what AB1 and
AB2 can demonstrate there, as discussed above.

\section{Limitations and Approximate Belief Representations}
\label{sec:limitations}

We list the assumptions under which the theorems of
\cref{sec:axioms} and the architecture of \cref{sec:architecture}
apply, together with the routes by which each can be relaxed.

\subsection{Model Knowledge}
\label{sec:lim-model}

The BSE requires a POMDP model $(T, Z)$ at run time. In the
experiments of \cref{sec:methodology} the model is given by the
environment specification, which is the standard setup for
canonical benchmarks and for operator-controlled decision support
tools such as a red-team attack planner with a documented system
inventory. Three reductions of the requirement are worth noting.

When the model is uncertain rather than unknown, the POMDP can be
replaced by a Bayes-adaptive POMDP in which the parameters of
$T$ and $Z$ are themselves latent random variables. The belief
now ranges over $\Ssp \times \Theta$ with $\Theta$ the parameter
space; the Bayes filter is still well defined and the axioms of
\cref{sec:axioms} are satisfied with $\Xsp = \simplex(\Ssp \times
\Theta)$. The cost is the usual one: the belief is
higher-dimensional, and the exact update becomes intractable unless
$\Theta$ is finite or conjugate priors are available.

When the model must be learned from data, the BSE becomes the
downstream consumer of a belief-state model rather than its
definition. Work on learning latent-state models from interaction
traces, from offline logs, or from an LLM prior on domain
semantics, all fits in the same slot: swap
\cref{alg:bse-update} from a tabular kernel evaluation to a
learned transition and emission model. The soundness theorem
(\cref{thm:llm-soundness}) continues to hold in the sense that the
composite agent is a Markov policy on the learned belief MDP; the
soundness claim relative to the true environment, however, is
only as strong as the learned model's fidelity.

When the model is outright misspecified, none of the guarantees
survive. This is not special to the BSE. Classical belief-MDP
planners inherit the same failure mode.

\subsection{Finite Latent Spaces}
\label{sec:lim-finite}

\cref{alg:bse-update} presumes $|\Ssp|$ small enough to materialise
the posterior as a vector. The axioms themselves are not so
restricted. Axiom~\ref{ax:probabilistic} places the representation
in $\simplex(\Ssp)$, which is well defined on any measurable
$\Ssp$, and Axiom~\ref{ax:recursive} accepts any measurable update
operator. What breaks at scale is the filter, not the theory.

Three drop-in replacements for the tabular filter are standard.
Particle filters represent $b_t$ by a weighted sample and update
via sequential Monte Carlo; amortised variational posteriors
represent $b_t$ by the parameters of a distribution family and
update via a learned recognition network; in factored POMDPs, a
dynamic Bayesian network structure lets the filter exploit
conditional independence. Each of these is a sound instantiation
of the BSE in the sense of Axioms~\ref{ax:recursive}--%
\ref{ax:probabilistic} up to the approximation error of the
chosen filter class, and
\cref{thm:value-eq} then holds only up to a corresponding error
term. Quantifying that error for the variational and particle
regimes is the subject of a companion line of work and is not the
subject of the present paper.

For the experiments we report, the finite-state tabular regime is
sufficient: the Tiger POMDP is binary and the largest attack-graph
configuration we evaluate has $|\Ssp| = 64$, which remains well
within the domain of exact belief updates.

\subsection{LLM Provider Non-Determinism}
\label{sec:lim-nondet}

Commercial LLM APIs do not guarantee bitwise-identical outputs
across calls with identical inputs, even at temperature zero.
Sources include batching, silent model updates, and
provider-side kernel replacements. This affects every result we
report through the LLM-policy component $\pi_{\LLM}$. We mitigate
the exposure by running paired seeds on the same day for any pair
of methods under comparison, by freezing the model checkpoint
where the provider exposes a versioned endpoint, and by reporting
three independent LLM-sampling seeds per episode. The open-weights
replication of \cref{sec:meth-replication} removes the provider
dependency altogether and lets the reader separate a provider
artefact from a genuine architectural effect.

The BSE itself is deterministic up to floating-point roundoff.
Any run-to-run variance in reported numbers is attributable to
$q_{\LLM}$ and to environment seeds, and the logging protocol of
\cref{sec:arch-impl} records enough information to attribute
each difference to its source after the fact.

\subsection{Reward Specification}
\label{sec:lim-reward}

The soundness theorem speaks about Bellman optimality on the
belief MDP defined by a given reward function. If the reward is
misspecified in the sense that it fails to capture what the
operator actually wants, the guarantee is technically intact but
practically vacuous. This is a shared limitation of reinforcement
learning and classical POMDP planning, not something specific to
the BSE. What the BSE does add is a clean substrate on which to
iterate on the reward without retraining any LLM component: the
LLM policy $\tilde\pi$ can be respecified per task by the system
prompt, and the filter is reward-free.

\subsection{Scope of the Soundness Claim}
\label{sec:lim-scope}

\cref{thm:llm-soundness} asserts that the composite
$(U_\beta, \pi_{\LLM})$ is a Markov policy on the belief MDP and
that its value equals that of $\pi_{\LLM}$ as a belief-measurable
policy. It does not assert that $\pi_{\LLM}$ is optimal. The LLM
component is the parameterisation of the belief-to-action map,
and its quality is ultimately empirical. The contribution of the
architecture is to isolate that quality. Given the BSE, the
question ``how well does the LLM pick actions from a given
posterior?'' becomes answerable independently of ``how well does
the LLM track a posterior from a raw history?''.

Three precondition reminders apply. The LLM must be given the
belief and only the belief. Violating this is the content of
ablation AB7. The prompt must be stateless across steps.
Violating this recovers a variant of ReAct. The model $(T, Z)$
must be the one the agent intends the belief to be calibrated to.
Violating this takes the soundness claim to a statement about
the model rather than the environment.

\subsection{Single-Agent and Stationary Setting}
\label{sec:lim-scope2}

The present paper stays inside the stationary single-agent POMDP.
Multi-agent extensions require a decentralised POMDP or an
interactive POMDP formulation, with a belief over other agents'
policies as well as the latent environment state. The axiomatic
framework transports to the interactive case once the state space
is extended to include agent-level types, but the LLM policy slot
then needs to produce an equilibrium selector, not merely a
best-response action. Non-stationary environments require either
an explicit change-point model inside $(T, Z)$ or a forgetting
mechanism on the filter. We flag both as natural next steps.

\section{Conclusion}
\label{sec:conclusion}

Large language model agents deployed as history-conditioned text
policies inherit a structural weakness that Chain-of-Thought
prompting, longer context windows, and reflection loops do not
address. The weakness is that text accumulation is not belief
maintenance. Two histories whose raw texts differ can yield the
same Bayesian posterior, and two histories whose posteriors agree
can have arbitrarily different surface representations. An LLM
policy operating on the serialised history is acting on a
representation strictly coarser than the belief MDP when the
serialiser is lossy, and strictly finer than it when surface
artifacts leak through. Either way, the conditions of classical
POMDP optimality are not met.

We have developed an architectural response. The Belief-State
Engine runs a Bayesian filter over a POMDP model as a module
external to the LLM, and at each decision step presents the LLM
with the posterior belief as its only decision-time context. Four
independent axioms specify what any internal state must satisfy
to qualify as belief-consistent, and we have shown that the
canonical posterior is the coarsest representation satisfying the
structural axioms, that the Bayes filter is the unique update
consistent with recursive belief maintenance, that policy and
value coincide with the belief MDP under belief-measurable
policies, that observationally indistinguishable hypotheses are
preserved by the filter, and that the LLM-BSE composition is a
sound Markov policy on the belief MDP. The last of these is the
precise sense in which the architecture converts an LLM
text-generator into a planning agent.

The experimental program in \cref{sec:methodology} is designed to
separate three questions: does an explicit belief help (BSE-
augmented agent versus the four LLM baselines); does the belief
need to be probabilistic (BSE-augmented agent versus
natural-language belief tracker); and does the benefit survive
changes of backbone, of serialiser, of temperature, and of
architectural detail (the ten ablations and the open-weights
replication). The paired-seed structure, the Jensen-Shannon
decision-consistency probe, and the pre-registered metric list
are chosen so that any headline finding from the experiments
attaches to an architectural claim that is either confirmed or
disconfirmed by a well-defined test. The full results and their
analysis will be reported in \cref{sec:results}.

Three broader implications follow if the experimental picture
lines up with the theoretical one. First, the LLM-agent literature
has a route by which to recover the optimality theory of
sequential decision-making under uncertainty, without retraining
the LLM and without assuming that Chain-of-Thought reasoning can
substitute for Bayesian conditioning. Second, a large class of
recurring failures in deployed LLM agents, including premature
commitment, calibration collapse, and non-Markovian policy drift,
have a single structural remedy rather than a catalogue of
prompt-engineering fixes. Third, the interface between LLMs and
classical planning is lighter than one might expect: a thin
external filter and a disciplined prompt are enough, provided the
LLM is kept away from the raw history.

Two lines of work open immediately. The first is a quantitative
theory of belief-MDP value loss under particle and variational
filter approximations, which is the path to scaling the
architecture to latent spaces where the exact filter is not
available. The second is the interactive and multi-agent
extension, where the belief must also range over other agents'
policies and where the LLM is asked to select an equilibrium
rather than a best response. Both lines leave the axiomatic core
intact. What changes is the shape of the filter and the type of
the policy slot, not the compositional principle the BSE is built
on.

We close with a design stance. Deploying an LLM as a planner
under partial observability is not, in the current state of the
technology, a matter of asking the LLM to reason harder. It is a
matter of letting the LLM do what it is good at, which is
selecting actions given a structured decision state, while giving
the decision state itself to a module whose job is precisely
that. The Belief-State Engine is a small commitment to that
division of labour, and the theory and methodology we have
presented are an attempt to make the commitment as explicit and
as testable as possible.

\bibliographystyle{IEEEtran}
\bibliography{references}

\begin{thebibliography}{10}
\providecommand{\url}[1]{#1}
\csname url@samestyle\endcsname
\providecommand{\newblock}{\relax}
\providecommand{\bibinfo}[2]{#2}
\providecommand{\BIBentrySTDinterwordspacing}{\spaceskip=0pt\relax}
\providecommand{\BIBentryALTinterwordstretchfactor}{4}
\providecommand{\BIBentryALTinterwordspacing}{\spaceskip=\fontdimen2\font plus
\BIBentryALTinterwordstretchfactor\fontdimen3\font minus
  \fontdimen4\font\relax}
\providecommand{\BIBforeignlanguage}[2]{{%
\expandafter\ifx\csname l@#1\endcsname\relax
\typeout{** WARNING: IEEEtran.bst: No hyphenation pattern has been}%
\typeout{** loaded for the language `#1'. Using the pattern for}%
\typeout{** the default language instead.}%
\else
\language=\csname l@#1\endcsname
\fi
#2}}
\providecommand{\BIBdecl}{\relax}
\BIBdecl

\bibitem{yao2022react}
S.~Yao, J.~Zhao, D.~Yu, N.~Du, I.~Shafran, K.~Narasimhan, and Y.~Cao,
  ``{ReAct}: Synergizing reasoning and acting in language models,'' in
  \emph{International Conference on Learning Representations (ICLR)}, 2023,
  arXiv:2210.03629.

\bibitem{shinn2023reflexion}
N.~Shinn, F.~Cassano, A.~Gopinath, K.~Narasimhan, and S.~Yao, ``{Reflexion}:
  Language agents with verbal reinforcement learning,'' in \emph{Advances in
  Neural Information Processing Systems (NeurIPS)}, 2023, arXiv:2303.11366.

\bibitem{yao2023tree}
S.~Yao, D.~Yu, J.~Zhao, I.~Shafran, T.~L. Griffiths, Y.~Cao, and K.~Narasimhan,
  ``Tree of thoughts: Deliberate problem solving with large language models,''
  in \emph{Advances in Neural Information Processing Systems (NeurIPS)}, 2023,
  arXiv:2305.10601.

\bibitem{wang2023voyager}
G.~Wang, Y.~Xie, Y.~Jiang, A.~Mandlekar, C.~Xiao, Y.~Zhu, L.~Fan, and
  A.~Anandkumar, ``{Voyager}: An open-ended embodied agent with large language
  models,'' \emph{Transactions on Machine Learning Research (TMLR)}, 2024,
  arXiv:2305.16291.

\bibitem{yang2024swe}
J.~Yang, C.~E. Jimenez, A.~Wettig, K.~Lieret, S.~Yao, K.~Narasimhan, and
  O.~Press, ``{SWE-agent}: Agent-computer interfaces enable automated software
  engineering,'' in \emph{Advances in Neural Information Processing Systems
  (NeurIPS)}, 2024, arXiv:2405.15793.

\bibitem{valmeekam2023planning}
K.~Valmeekam, M.~Marquez, S.~Sreedharan, and S.~Kambhampati, ``On the planning
  abilities of large language models: A critical investigation,'' in
  \emph{Advances in Neural Information Processing Systems (NeurIPS)}, 2023,
  arXiv:2305.15771.

\bibitem{liu2024agentbench}
X.~Liu, H.~Yu, H.~Zhang, Y.~Xu, X.~Lei, H.~Lai, Y.~Gu, H.~Ding, K.~Men,
  K.~Yang, S.~Zhang, X.~Deng, A.~Zeng, Z.~Du, C.~Zhang, S.~Shen, T.~Zhang,
  Y.~Su, H.~Sun, M.~Huang, Y.~Dong, and J.~Tang, ``{AgentBench}: Evaluating
  {LLMs} as agents,'' in \emph{International Conference on Learning
  Representations (ICLR)}, 2024, arXiv:2308.03688.

\bibitem{xi2023rise}
Z.~Xi, W.~Chen, X.~Guo, W.~He, Y.~Ding, B.~Hong, M.~Zhang, J.~Wang, S.~Jin,
  E.~Zhou \emph{et~al.}, ``The rise and potential of large language model based
  agents: A survey,'' \emph{Science China Information Sciences}, 2025,
  arXiv:2309.07864.

\bibitem{astrom1965}
K.~J. {\AA}str\"{o}m, ``Optimal control of {M}arkov processes with incomplete
  state information,'' \emph{Journal of Mathematical Analysis and
  Applications}, vol.~10, no.~1, pp. 174--205, 1965.

\bibitem{smallwood1973}
R.~D. Smallwood and E.~J. Sondik, ``The optimal control of partially observable
  {M}arkov processes over a finite horizon,'' \emph{Operations Research},
  vol.~21, no.~5, pp. 1071--1088, 1973.

\bibitem{puterman1994}
M.~L. Puterman, \emph{{M}arkov Decision Processes: Discrete Stochastic Dynamic
  Programming}.\hskip 1em plus 0.5em minus 0.4em\relax New York, NY, USA:
  Wiley, 1994.

\bibitem{kaelbling1998planning}
L.~P. Kaelbling, M.~L. Littman, and A.~R. Cassandra, ``Planning and acting in
  partially observable stochastic domains,'' \emph{Artificial Intelligence},
  vol. 101, no. 1--2, pp. 99--134, 1998.

\bibitem{sondik1971}
E.~J. Sondik, ``The optimal control of partially observable {M}arkov
  processes,'' Ph.D. dissertation, Stanford University, 1971.

\bibitem{striebel1965}
C.~Striebel, ``Sufficient statistics in the optimum control of stochastic
  systems,'' \emph{Journal of Mathematical Analysis and Applications}, vol.~12,
  no.~3, pp. 576--592, 1965.

\bibitem{papadimitriou1987}
C.~H. Papadimitriou and J.~N. Tsitsiklis, ``The complexity of {M}arkov decision
  processes,'' \emph{Mathematics of Operations Research}, vol.~12, no.~3, pp.
  441--450, 1987.

\bibitem{pineau2003}
J.~Pineau, G.~Gordon, and S.~Thrun, ``Point-based value iteration: An anytime
  algorithm for {POMDPs},'' in \emph{Proc. International Joint Conference on
  Artificial Intelligence (IJCAI)}, 2003.

\bibitem{kurniawati2008}
H.~Kurniawati, D.~Hsu, and W.~S. Lee, ``{SARSOP}: Efficient point-based {POMDP}
  planning by approximating optimally reachable belief spaces,'' in
  \emph{Robotics: Science and Systems (RSS)}, 2008.

\bibitem{littman1995}
M.~L. Littman, A.~R. Cassandra, and L.~P. Kaelbling, ``Learning policies for
  partially observable environments: Scaling up,'' in \emph{Proc. International
  Conference on Machine Learning (ICML)}, 1995.

\bibitem{silver2010}
D.~Silver and J.~Veness, ``{Monte-Carlo} planning in large {POMDPs},'' in
  \emph{Advances in Neural Information Processing Systems (NeurIPS)}, 2010.

\bibitem{sumers2024cognitive}
T.~R. Sumers, S.~Yao, K.~Narasimhan, and T.~L. Griffiths, ``Cognitive
  architectures for language agents,'' \emph{Transactions on Machine Learning
  Research (TMLR)}, 2024, arXiv:2309.02427.

\bibitem{valmeekam2024planbench}
K.~Valmeekam, A.~Olmo, S.~Sreedharan, and S.~Kambhampati, ``{PlanBench}: An
  extensible benchmark for evaluating large language models on planning and
  reasoning about change,'' in \emph{Advances in Neural Information Processing
  Systems (NeurIPS) Datasets and Benchmarks Track}, 2023, arXiv:2206.10498.

\bibitem{hao2023rap}
S.~Hao, Y.~Gu, H.~Ma, J.~Hong, Z.~Wang, D.~Wang, and Z.~Hu, ``Reasoning with
  language model is planning with world model,'' in \emph{Proc. Conference on
  Empirical Methods in Natural Language Processing (EMNLP)}, 2023,
  arXiv:2305.14992.

\bibitem{du2023guiding}
Y.~Du, O.~Watkins, Z.~Wang, C.~Colas, T.~Darrell, P.~Abbeel, A.~Gupta, and
  J.~Andreas, ``Guiding pretraining in reinforcement learning with large
  language models,'' in \emph{Proc. International Conference on Machine
  Learning (ICML)}, 2023, arXiv:2302.06692.

\bibitem{xie2022explanation}
S.~M. Xie, A.~Raghunathan, P.~Liang, and T.~Ma, ``An explanation of in-context
  learning as implicit {B}ayesian inference,'' in \emph{International
  Conference on Learning Representations (ICLR)}, 2022, arXiv:2111.02080.

\bibitem{kadavath2022language}
S.~Kadavath, T.~Conerly, A.~Askell, T.~Henighan, D.~Drain, E.~Perez,
  N.~Schiefer, Z.~Hatfield-Dodds, N.~DasSarma, E.~Tran-Johnson \emph{et~al.},
  ``Language models (mostly) know what they know,'' \emph{arXiv preprint
  arXiv:2207.05221}, 2022.

\bibitem{kuhn2023semantic}
L.~Kuhn, Y.~Gal, and S.~Farquhar, ``Semantic uncertainty: Linguistic
  invariances for uncertainty estimation in natural language generation,'' in
  \emph{International Conference on Learning Representations (ICLR)}, 2023,
  arXiv:2302.09664.

\bibitem{angelopoulos2023conformal}
A.~N. Angelopoulos and S.~Bates, ``A gentle introduction to conformal
  prediction and distribution-free uncertainty quantification,''
  \emph{Foundations and Trends in Machine Learning}, vol.~16, no.~4, pp.
  494--591, 2023.

\bibitem{chattopadhayay2026techrxiv}
A.~Chattopadhayay and D.~Halder, ``An axiomatic framework for belief-state
  representation in partially observable decision processes,'' TechRxiv
  preprint, Feb. 2026, authors' earlier version of the present work; superseded
  by this preprint.

\bibitem{phillips1998attacktree}
C.~Phillips and L.~P. Swiler, ``A graph-based system for network-vulnerability
  analysis,'' in \emph{Proc. New Security Paradigms Workshop (NSPW)}, 1998.

\bibitem{kechris1995}
A.~S. Kechris, \emph{Classical Descriptive Set Theory}, ser. Graduate Texts in
  Mathematics.\hskip 1em plus 0.5em minus 0.4em\relax New York, NY, USA:
  Springer-Verlag, 1995, vol. 156.

\end{thebibliography}

\appendices
\section{Proofs of Theorems in \cref{sec:axioms}}
\label{app:proofs}

For convenience we repeat the canonical posterior recursion: with
$\beta(\varnothing) = \mu_0$,
\begin{equation}
\begin{split}
  \beta(h \cdot (a, o))(s')
  &= \frac{Z(o \mid s', a) \sum_{s} T(s' \mid s, a)\, \beta(h)(s)}
          {\sum_{s''} Z(o \mid s'', a) \sum_{s} T(s'' \mid s, a)\, \beta(h)(s)}.
\end{split}
  \label{eq:bayes-filter-app}
\end{equation}
We denote by $U_\beta$ the operator on
$\simplex(\Ssp) \times \Asp \times \Osp$ induced by
\eqref{eq:bayes-filter-app}. Throughout this appendix, $t := |h|$,
and $\PP(\cdot)$ denotes the trajectory law induced by $\mu_0$,
$T$, and $Z$.

\subsection{Proof of \cref{thm:existence}}

We verify the three structural axioms in turn.

\emph{Axiom~\ref{ax:recursive}.} Setting $x_0 = \mu_0$ and
$U = U_\beta$,
$\beta(\varnothing) = \mu_0 = x_0$ by definition, and
$\beta(h \cdot (a, o)) = U_\beta(\beta(h), a, o)$ by
\eqref{eq:bayes-filter-app}.

\emph{Axiom~\ref{ax:sufficient}.} By a standard computation,
$\beta(h)(s) = \PP(S_t {=} s \mid h, \mu_0)$, so the map
$h \mapsto \PP(S_t {=} s \mid h)$ factors through $\beta$. For the
one-step observation law, condition on the next latent state:
\begin{align*}
  \PP(O_{t+1} {=} o \mid h, a)
  &= \sum_{s' \in \Ssp} Z(o \mid s', a)\,
       \PP(S_{t+1} {=} s' \mid h, a) \\
  &= \sum_{s' \in \Ssp} Z(o \mid s', a)
       \sum_{s \in \Ssp} T(s' \mid s, a)\, \beta(h)(s).
\end{align*}
The right-hand side depends on $h$ only through $\beta(h)$, so if
$\beta(h) = \beta(h')$ then
$\PP(O_{t+1} {=} o \mid h, a) = \PP(O_{t'+1} {=} o \mid h', a)$ as
required.

\emph{Axiom~\ref{ax:probabilistic}.} By construction $\beta(h)$ is
a convex combination of elementary Bayesian posteriors; the
denominator in \eqref{eq:bayes-filter-app} is the marginal
observation probability and is strictly positive on the reachable
set, so $\beta(h) \in \simplex(\Ssp)$.
\qed

\subsection{Proof of \cref{thm:minimality}}

Fix $h, h' \in \Hsp$ with $\psi(h) = \psi(h')$. By
Axiom~\ref{ax:sufficient},
$\PP(S_t {=} s \mid h) = \PP(S_{t'} {=} s \mid h')$ for every
$s \in \Ssp$. Since
$\beta(h)(s) = \PP(S_t {=} s \mid h)$ and
$\beta(h')(s) = \PP(S_{t'} {=} s \mid h')$, we obtain
$\beta(h) = \beta(h')$. Hence $\beta$ is constant on each fibre
$\psi^{-1}(\{x\})$ of $\psi$, so it factors through $\psi$:
\begin{equation*}
  g(x) := \beta(h) \quad \text{for any } h \text{ with }
  \psi(h) = x,
\end{equation*}
is well defined on $\psi(\Hsp)$. Since $\Xsp$ is Polish and
$\psi(\Hsp)$ is analytic, $g$ extends to a measurable function on
$\Xsp$ by the measurable-selection theorem
\cite[Thm.~12.13]{kechris1995}.
\qed

\subsection{Proof of \cref{cor:canonical}}

Let $\psi$ satisfy Axioms~\ref{ax:recursive}--\ref{ax:probabilistic}.
\cref{thm:minimality} yields a measurable
$g : \Xsp \to \simplex(\Ssp)$ with $\beta = g \circ \psi$. Under
\ref{ax:probabilistic}, $\psi(h) \in \simplex(\Ssp)$ for every
$h$. If $\psi$ is itself pointwise equal to a Bayesian posterior
on $\Ssp$, then for every $x \in \psi(\Hsp)$, $x = g(x)$ and $g$
is the identity on $\psi(\Hsp)$; hence $\psi = \beta$ on all of
$\Hsp$.
\qed

\subsection{Proof of \cref{thm:unique-update}}

Let $\psi = \beta$ and let $U$ satisfy Axiom~\ref{ax:recursive}
for $\beta$. Take any
$(b, a, o) \in \simplex(\Ssp) \times \Asp \times \Osp$ with
$b \in \beta(\Hsp)$ and $\PP(o \mid b, a) > 0$. Choose
$h \in \Hsp$ with $\beta(h) = b$; such an $h$ exists by the
assumption $b \in \beta(\Hsp)$. Then
\begin{equation*}
  U(b, a, o) = U(\beta(h), a, o)
            = \beta(h \cdot (a, o))
            = U_\beta(b, a, o),
\end{equation*}
where the middle equality uses Axiom~\ref{ax:recursive} for $U$
and the final equality is \cref{def:canonical-posterior}. If $U$
is continuous on $\simplex(\Ssp)$ in its first argument, then
$U$ and $U_\beta$ agree on the reachable set and both are
continuous, so they agree on its closure, which is all of
$\simplex(\Ssp)$.
\qed

\subsection{Proof of \cref{thm:policy-invariance}}

By Axiom~\ref{ax:policy} there exists measurable
$\tilde\pi$ with
$\pi(\cdot \mid h) = \tilde\pi(\cdot \mid \psi(h))$ for every
$h$. If $\psi(h) = \psi(h')$, then
$\pi(\cdot \mid h)
= \tilde\pi(\cdot \mid \psi(h))
= \tilde\pi(\cdot \mid \psi(h'))
= \pi(\cdot \mid h')$.
\qed

\subsection{Proof of \cref{thm:value-eq}}

We prove the finite-horizon version by induction on the horizon
$T$. The infinite-horizon discounted case then follows by a
standard contraction-mapping argument on the belief
MDP~\cite[Ch.~6]{puterman1994}.

\emph{Base case ($T = 0$).} No future reward is collected, so
\begin{equation*}
  V_{0,\Hsp}^\star(h)
  = \max_{a \in \Asp} \sum_{s \in \Ssp} \beta(h)(s)\, R(s, a)
  = V_{0,\Delta}^\star(\beta(h)).
\end{equation*}

\emph{Inductive step.} Suppose
$V_{T-1, \Hsp}^\star(h) = V_{T-1, \Delta}^\star(\beta(h))$ for
every $h$. Then
\begin{align*}
  V_{T, \Hsp}^\star(h)
  ={}& \max_{a \in \Asp}
       \Big[
         \sum_{s \in \Ssp} \beta(h)(s)\, R(s, a) \\
     & \quad + \gamma \sum_{o \in \Osp}
                \PP(o \mid h, a)\,
                V_{T-1, \Hsp}^\star(h \cdot (a, o))
       \Big].
\end{align*}
By Axiom~\ref{ax:sufficient},
$\PP(o \mid h, a) = \PP(o \mid \beta(h), a)$. By
\cref{def:canonical-posterior},
$\beta(h \cdot (a, o)) = U_\beta(\beta(h), a, o)$. By the
induction hypothesis,
$V_{T-1, \Hsp}^\star(h \cdot (a, o))
 = V_{T-1, \Delta}^\star(U_\beta(\beta(h), a, o))$. Substituting
these three equalities,
\begin{align*}
  V_{T, \Hsp}^\star(h)
  ={}& \max_{a \in \Asp}
       \Big[
         \sum_{s \in \Ssp} \beta(h)(s)\, R(s, a) \\
     & \quad + \gamma \sum_{o \in \Osp}
                \PP(o \mid \beta(h), a)\,
                V_{T-1, \Delta}^\star(U_\beta(\beta(h), a, o))
       \Big] \\
  ={}& V_{T, \Delta}^\star(\beta(h)),
\end{align*}
which is the Bellman recursion of the belief MDP. The optimal
history-policy is obtained by applying the same argument to the
greedy policy of the belief MDP.
\qed

\subsection{Proof of \cref{thm:ambiguity}}

Let $s \sim s'$ and let $h \cdot (a, o)$ be any single-step
extension of a history $h$. From \eqref{eq:bayes-filter-app},
\begin{align*}
  \frac{\beta(h \cdot (a, o))(s)}{\beta(h \cdot (a, o))(s')}
  &= \underbrace{\frac{Z(o \mid s, a)}{Z(o \mid s', a)}}
               _{= 1 \text{ by (i)}} \\
  &\quad \cdot
     \underbrace{\frac{\sum_{\tilde s} T(s \mid \tilde s, a)\, \beta(h)(\tilde s)}
                      {\sum_{\tilde s} T(s' \mid \tilde s, a)\, \beta(h)(\tilde s)}}
               _{= 1 \text{ by (ii)}}
   = 1.
\end{align*}
Hence a single update preserves the equality
$\beta(\cdot)(s) = \beta(\cdot)(s')$. Induction on history length,
with base case $\beta(\varnothing) = \mu_0$, completes the proof.
\qed

\subsection{Proof of \cref{cor:symmetric}}

Immediate from \cref{thm:ambiguity} with
$\mu_0(s) = \mu_0(s')$ as the base case of the induction.
\qed

\subsection{Proof of \cref{thm:llm-soundness}}

\emph{Axioms.} Axiom~\ref{ax:recursive} holds by the choice
$U = U_\beta$ with $x_0 = \mu_0$. Axiom~\ref{ax:sufficient} is
\cref{thm:existence} applied to $\beta$. Axiom~\ref{ax:probabilistic}
holds because $b_t = \beta(h_t) \in \simplex(\Ssp)$ by
construction. Axiom~\ref{ax:policy} holds by hypothesis:
$\pi_{\LLM}$ is a measurable function of the belief.

\emph{Markovianity of $(b_t, a_t)$.} Conditional on
$b_t = \beta(h_t)$ and $a_t = a$, Axiom~\ref{ax:sufficient} gives
$\PP(o \mid h_t, a) = \PP(o \mid b_t, a)$. The next belief is
deterministic given $(b_t, a_t, o_{t+1})$:
$b_{t+1} = U_\beta(b_t, a_t, o_{t+1})$. The next action is drawn
from $\pi_{\LLM}(\cdot \mid b_{t+1})$ by
Axiom~\ref{ax:policy}. Combining these,
\begin{align*}
  &\PP(b_{t+1} {=} b', a_{t+1} {=} a' \mid b_t {=} b, a_t {=} a) \\
  &\quad = \sum_{o \in \Osp}
             \mathbf{1}\!\left[b' = U_\beta(b, a, o)\right]
             \PP(o \mid b, a)\,
             \pi_{\LLM}(a' \mid b'),
\end{align*}
which is \eqref{eq:soundness-kernel}.

\emph{Value coincidence.} Apply \cref{thm:value-eq} with
$\tilde\pi = \pi_{\LLM}$ in the role of the belief-measurable
policy. The value of the history-policy
$\pi_\Hsp(\cdot \mid h) := \pi_{\LLM}(\cdot \mid \beta(h))$
equals the value of $\pi_{\LLM}$ on the belief MDP.
\qed

\section{Independence of Axioms \ref{ax:recursive}--\ref{ax:policy}}
\label{app:independence}

\begin{proposition}[Independence of the axiom set]
\label{prop:independence}
Axioms~\ref{ax:recursive}--\ref{ax:policy} are mutually
independent. For each axiom $\mathrm{A}i$ there exists a
representation that satisfies the remaining three axioms but not
$\mathrm{A}i$.
\end{proposition}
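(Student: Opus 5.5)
We argue by explicit construction. We fix one concrete POMDP, the Tiger POMDP of \cref{sec:meth-envs}, and for each axiom $\mathrm{A}i$ exhibit an agent $(\psi,\pi)$ that violates $\mathrm{A}i$ while satisfying the other three. Two facts make the measurability clauses automatic throughout. First, $\Hsp$ is countable, so every map out of $\Hsp$ is measurable. Second, every representation we use is either $\beta$ or an injective function of $\beta$, possibly modified at a single point. In three of the four cases we take $\pi$ to be a constant policy $\tilde\pi \equiv \delta_{\texttt{listen}}$, which makes Axiom~\ref{ax:policy} trivial. We use two belief coincidences in Tiger. First, $h_1=(\texttt{listen},\text{hear-left},\texttt{listen},\text{hear-right})$ and $h_2$ (the same with the observations swapped) are distinct histories with $\beta(h_1)=\beta(h_2)$ uniform. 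Second, $g=(\texttt{open-left},\text{hear-left})$ has $\beta(g)=\mu_0$, since the open action resets the state and its observation is uninformative, while $|g|=1\neq 0=|\varnothing|$.

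\emph{A2 fails.} Take $\psi\equiv\mu_0$, with $x_0=\mu_0$ and $U(x,a,o)=x$. Then Axioms~\ref{ax:recursive} and \ref{ax:probabilistic} hold, and Axiom~\ref{ax:policy} holds with the constant policy. Predictive sufficiency fails: $\psi(\varnothing)=\psi((\texttt{listen},\text{hear-left}))$, yet the posteriors are $(0.5,0.5)$ and $(0.85,0.15)$.

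\emph{A3 fails.} Take $\psi(h)=(\beta(h),0)\in\R^{|\Ssp|+1}$, so that $\Xsp=\R^{|\Ssp|+1}$ is Polish but $\Xsp\not\subseteq\simplex(\Ssp)$. Recursion holds with the conjugated operator $U((x,0),a,o)=(U_\beta(x,a,o),0)$, extended arbitrarily off the image. Sufficiency is inherited from \cref{thm:existence}, because $\psi$ is an injective function of $\beta$. Axiom~\ref{ax:policy} holds with the constant policy.

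\emph{A4 fails.} Take $\psi=\beta$, so Axioms~\ref{ax:recursive}--\ref{ax:probabilistic} hold by \cref{thm:existence}. Let $\pi(\cdot\mid h)=\delta_{\texttt{listen}}$ for $|h|$ even and $\delta_{\texttt{open-left}}$ for $|h|$ odd. Then $\psi(\varnothing)=\psi(g)$ but $\pi(\cdot\mid\varnothing)\neq\pi(\cdot\mid g)$. This contradicts \cref{thm:policy-invariance}, so no $\tilde\pi$ as in Axiom~\ref{ax:policy} exists.

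\emph{A1 fails.} This is the delicate case, because sufficiency forces $\psi$ to be at least as fine as $\beta$, and $\beta$ itself is recursive. We break recursion by splitting one successor. Since $\beta(\Hsp)$ is countable and $\simplex(\Ssp)$ is uncountable, we can pick $x^\ast\in\simplex(\Ssp)\setminus\beta(\Hsp)$. Define $\psi=\beta$ everywhere except $\psi(h_1\cdot(\texttt{listen},\text{hear-left}))=x^\ast$.

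Axiom~\ref{ax:probabilistic} holds, and Axiom~\ref{ax:policy} holds with the constant policy. For Axiom~\ref{ax:sufficient}, note that the fibre of $x^\ast$ is a singleton and every other fibre is a fibre of $\beta$ with possibly one point removed. Hence $\psi(h)=\psi(h')$ still implies $\beta(h)=\beta(h')$, and the argument of \cref{thm:existence} applies.

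Recursion nevertheless fails. Any operator $U$ would have to satisfy
\[
U(\psi(h_1),\texttt{listen},\text{hear-left})=x^\ast
\]
and also
\[
U(\psi(h_2),\texttt{listen},\text{hear-left})=\beta(h_2\cdot(\texttt{listen},\text{hear-left})).
\]
The two left-hand sides have the same arguments, since $\psi(h_1)=\psi(h_2)$. The two right-hand sides differ, since $x^\ast\notin\beta(\Hsp)$. So no such $U$ exists.

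The only points needing care are these. The modified history must differ from $h_1$ and $h_2$ themselves, so that $\psi(h_1)=\psi(h_2)$ is kept; this holds because the modified history is strictly longer than both. And $x^\ast$ must be chosen off the countable reachable set. I expect this A1 construction to be the main obstacle; the other three cases are routine.
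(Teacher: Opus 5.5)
Your proposal is correct. For A2 and A3 it coincides with the paper: the paper uses the same constant $\psi\equiv\mu_0$, and its rescaling $1000\cdot\beta$ plays the role of your padding $(\beta,0)$. For A4 you use the same parity-dependent policy. The genuine difference is the A1 case, where your route is not just different but necessary.

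The paper's A1 counter-model is the raw history $\psi(h)=h$ with $\Xsp=\Hsp$. But $\Hsp=\bigcup_t\Hsp_t$ is countable and discrete, hence Polish, and on it concatenation $U(h,a,o)=h\cdot(a,o)$ with $x_0=\varnothing$ is a legitimate update. So that model satisfies A1 and fails A3 instead, and the paper's ``fixed-domain'' objection does not hold up.

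Your idea is to keep $\psi=\beta$ except at the single successor $h_1\cdot(\texttt{listen},\text{hear-left})$, which you send to an unreachable $x^\ast\in\simplex(\Ssp)\setminus\beta(\Hsp)$. This works because A2 and A3 only force the fibres of $\psi$ to refine those of $\beta$ and its values to lie in the simplex; they do not force $\psi$ to \emph{be} the posterior. The collision $\beta(h_1)=\beta(h_2)$ then makes every candidate $U$ ill-defined.

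Fixing Tiger also gives you three things the paper's generic phrasing lacks.
\begin{itemize}
\item Explicit witnesses for A4. Histories of different length parity with equal belief need not exist in general: for a deterministic two-state flip with $\mu_0=\delta_0$ one has $\beta(h)=\delta_{|h| \bmod 2}$, and then the parity policy is belief measurable.
\item Strictly positive observation probabilities, so $\beta(h)(s)=\PP(S_t{=}s\mid h)$ holds for every $h$ without the zero-prior convention.
\item A nondegenerate model. With $|\Ssp|=1$, A2 holds for every $\psi$, so independence must be read existentially over POMDPs, as you do.
\end{itemize}

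One small repair: measurability of $U$ is not automatic from the countability of $\Hsp$, since the domain of $U$ contains $\Xsp$. In the A3 case, extend $U$ off the countable (hence Borel) image by a constant rather than ``arbitrarily.''
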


\begin{proof}
We exhibit four explicit counter-models.

\emph{Not \ref{ax:recursive}.} Let
$\psi(h) := |h|$, the history length, valued in $\N \subset \R$.
The composite $(\psi, \tilde\pi \circ \psi)$ cannot be equipped
with a fixed operator
$U : \N \times \Asp \times \Osp \to \N$ that reproduces the
joint evolution of $(S_t, O_t)$ through $\psi$ alone, because
$|h|$ carries no information about either. Sufficiency
\ref{ax:sufficient} therefore fails too in this trivial model.
A more instructive counter-model that fails only \ref{ax:recursive}
is $\psi(h) := h$, the raw history, with $\Xsp = \Hsp$. This is
trivially sufficient in the measure-theoretic sense, and any
policy can be written as $\pi(h) = \tilde\pi(\psi(h))$ with
$\tilde\pi = \pi$, but no single operator
$U : \Xsp \times \Asp \times \Osp \to \Xsp$ of constant signature
acts as the update, because the natural update takes
$(h, a, o) \mapsto h \cdot (a, o)$, whose codomain $\Hsp_{t+1}$
strictly extends $\Hsp_t$. Recasting $\psi$ to live in
$\bigcup_t \Hsp_t$ repairs the signature but still fails the
fixed-domain requirement: the state at time $t$ is confined to
$\Hsp_t$ and cannot be updated from an arbitrary element of
$\Xsp$.

\emph{Not \ref{ax:sufficient}.} Take $\psi(h) \equiv \mu_0$, the
constant representation that ignores all observations. Then
$U = \mathrm{id}$ satisfies \ref{ax:recursive};
$\psi \in \simplex(\Ssp)$ satisfies \ref{ax:probabilistic}; any
$\tilde\pi : \simplex(\Ssp) \to \simplex(\Asp)$ satisfies
\ref{ax:policy}. Yet $\psi$ is not sufficient whenever
observations carry information about $S_t$, so
\ref{ax:sufficient} fails.

\emph{Not \ref{ax:probabilistic}.} Take
$\psi(h) := 1000 \cdot \beta(h)$, a homeomorphic re-encoding of
$\beta$ into
$[0, 1000]^{|\Ssp|}$. Conjugating $U_\beta$ by the
re-encoding yields a valid update operator
$U(x, a, o) := 1000 \cdot U_\beta(x/1000, a, o)$ satisfying
\ref{ax:recursive}; sufficiency \ref{ax:sufficient} is preserved
because $\psi$ is a bijection on the reachable set;
$\tilde\pi(\cdot \mid x) := \pi^\star(\cdot \mid x/1000)$
recovers \ref{ax:policy}. But $\psi(h) \notin \simplex(\Ssp)$, so
\ref{ax:probabilistic} fails.

\emph{Not \ref{ax:policy}.} Let $\psi = \beta$, so that
\ref{ax:recursive}--\ref{ax:probabilistic} hold by
\cref{thm:existence}. Define a policy that depends on the parity
of $|h|$ as well as $\beta(h)$:
\begin{equation*}
  \pi(a \mid h) := \begin{cases}
    \tilde\pi_{\text{even}}(a \mid \beta(h)) & \text{if } |h| \text{ even}, \\
    \tilde\pi_{\text{odd}}(a \mid \beta(h))  & \text{if } |h| \text{ odd},
  \end{cases}
\end{equation*}
with $\tilde\pi_{\text{even}} \neq \tilde\pi_{\text{odd}}$. Two
histories $h, h'$ with the same belief but different length
parities yield
$\pi(\cdot \mid h) \neq \pi(\cdot \mid h')$, so no single
measurable $\tilde\pi : \simplex(\Ssp) \to \simplex(\Asp)$
satisfies \ref{ax:policy}.
\end{proof}

\begin{remark}[Bridge to the TechRxiv version]
The eight-axiom formulation of the earlier version of this work
is absorbed into the present set as follows. Partial
observability is the framing assumption, captured by the POMDP
tuple of \cref{sec:prelim-pomdp} and not as an axiom. Probabilistic
internalisation and predictive sufficiency are our
Axioms~\ref{ax:probabilistic} and \ref{ax:sufficient}. Recursive
updatability is our Axiom~\ref{ax:recursive}. Evidence
monotonicity is derivable from the Bayes filter and is now a
consequence of \cref{thm:unique-update}. Ambiguity preservation
is \cref{thm:ambiguity} rather than an axiom. Policy invariance
is \cref{thm:policy-invariance}. Belief-action separation is our
Axiom~\ref{ax:policy}. The earlier set of eight postulates is
thus reduced to four independent axioms together with three
theorems.
\end{remark}

\section{Environment Specifications}
\label{app:environments}

This appendix gives the exact transition, observation, and reward
parameters implemented in the released environment code
(\texttt{tiger\_pomdp/environment.py} and
\texttt{red\_team\_graph/environment.py}), matching the summary in
\cref{sec:meth-envs}.

\subsection{Tiger POMDP}

$\Ssp = \{\text{left}, \text{right}\}$,
$\Asp = \{\text{listen}, \text{open-left}, \text{open-right}\}$,
$\Osp = \{\text{hear-left}, \text{hear-right}\}$, $\mu_0 =
(0.5, 0.5)$, $\gamma = 0.95$, $T_{\max} = 20$.

\begin{align*}
  T(\cdot \mid \cdot, \text{listen}) &= I_2, \\
  T(\cdot \mid \cdot, \text{open-left}) =
  T(\cdot \mid \cdot, \text{open-right}) &=
    \begin{bmatrix} 0.5 & 0.5 \\ 0.5 & 0.5 \end{bmatrix}, \\
  Z(\cdot \mid \cdot, \text{listen}) &=
    \begin{bmatrix} 0.85 & 0.15 \\ 0.15 & 0.85 \end{bmatrix}, \\
  Z(\cdot \mid \cdot, \text{open-left}) =
  Z(\cdot \mid \cdot, \text{open-right}) &=
    \begin{bmatrix} 0.5 & 0.5 \\ 0.5 & 0.5 \end{bmatrix}.
\end{align*}
Rows/columns are ordered (left, right). Reward
$R(\text{left}, \cdot) = (-1, -100, +10)$ and
$R(\text{right}, \cdot) = (-1, +10, -100)$ over
(listen, open-left, open-right).

\subsection{Red-Team Attack Graph ($K = 6$)}

$\Ssp = \{0,1\}^6$ ($|\Ssp| = 64$), one binary
(\emph{vulnerable}/\emph{hardened}) latent variable per node.
$\Asp = \{\text{scan}(i), \text{exploit}(i) : i \in
\{0,\dots,5\}\} \cup \{\text{wait}\}$; the \texttt{patch} action
family exists in the model definition but is excluded from the
action menu presented to every method, because its transition
dynamics are an unimplemented placeholder in the released
environment (\cref{sec:results-limitations}). $\Osp = \{0, 1\}$
(scan result), $\gamma = 0.95$, $T_{\max} = 30$, $\mu_0 =$ uniform
over $\Ssp$.

\emph{Observation model.} $\text{scan}(i)$ yields
$Z(1 \mid \text{state}, \text{scan}(i)) = 0.85$ if node $i$ is
vulnerable and $0.10$ if hardened (i.e.\ false-negative rate
$\beta = 0.15$, false-positive rate $\alpha = 0.10$); every other
action yields an uninformative observation,
$Z(o \mid \cdot, a) = 0.5$ for $a \neq \text{scan}(\cdot)$.

\emph{Transition model.} As implemented, $T(\cdot \mid s, a) =
\mathbf{1}[s]$ (the identity) for \emph{every} action $a$,
including \texttt{exploit}: node hardening by \texttt{patch} and
state change on \texttt{exploit} are unimplemented placeholders
(\texttt{environment.py} contains a \texttt{TODO} to this effect).
A node-dependency topology for lateral movement is declared but
not wired into the transition kernel:
\texttt{dependencies = \{0: [], 1: [], 2: [0], 3: [0,1], 4: [2,3],
5: [2,3]\}}, encoding an intended prerequisite structure (nodes 2
and 3 depend on node 0/1 being compromised first, nodes 4 and 5
depend on both 2 and 3) that the current release does not yet
enforce. Node vulnerability is instead drawn i.i.d.\ per episode.
This is the reason \textbf{AB1} (drop prediction step) shows only
a near-null effect on this environment (\cref{sec:results-attack}):
the operation it removes is already a no-op under the identity
kernel.

\emph{Reward.} $R(\text{scan}) = -0.5$,
$R(\text{exploit-success}) = +20$,
$R(\text{exploit-fail}) = -5$, $R(\text{wait}) = -0.1$; a repeat
exploit on an already-compromised node pays $0$ (no reward
farming). These match \cref{sec:meth-envs} except that
$R(\text{patch})$ is not applicable, since \texttt{patch} is
excluded from the action menu.

\section{Prompt Templates}
\label{app:prompts}

This appendix reproduces the exact system and user prompts issued
to the LLM by the evaluation scripts
(\texttt{tiger\_pomdp/run\_full\_eval.py} and
\texttt{red\_team\_graph/run\_full\_eval.py}), for the three
methods evaluated in \cref{sec:results-scope}. All three methods
within an environment share the same system prompt, temperature,
and parser (\cref{sec:meth-baselines}); only the state
representation in the user prompt differs. \texttt{\{\ldots\}}
denotes a value substituted at run time.

\subsection{Tiger POMDP}

\emph{System prompt.}
{\footnotesize
\begin{verbatim}
Select the action that maximizes the expected
immediate reward given the information provided
below. Break ties uniformly. Output your
reasoning, then a final line strictly formatted
as 'ACTION: <action>'.
\end{verbatim}
}

\emph{Reactive user prompt.}
{\footnotesize
\begin{verbatim}
DOMAIN: Tiger POMDP
GOAL: Choose the door without the tiger.
REWARDS: +10 for treasure, -100 for tiger, -1 for
listening.
AVAILABLE ACTIONS: listen, open-left, open-right

LATEST SENSOR OBSERVATION: {hear-left|hear-right}
\end{verbatim}
}

\emph{\BSE\ user prompt.}
{\footnotesize
\begin{verbatim}
DOMAIN: Tiger POMDP
GOAL: Choose the door with treasure while avoiding
the tiger.
REWARDS: +10 for treasure, -100 for tiger, -1 for
listening.
AVAILABLE ACTIONS: listen, open-left, open-right

CURRENT BELIEF POSTERIOR:
- tiger-left:  {b(left):.4f}
- tiger-right: {b(right):.4f}
\end{verbatim}
}

\emph{NL-Tracker user prompt.}
{\footnotesize
\begin{verbatim}
DOMAIN: Tiger POMDP
GOAL: Choose the door with treasure while avoiding
the tiger.
REWARDS: +10 for treasure, -100 for tiger, -1 for
listening.
AVAILABLE ACTIONS: listen, open-left, open-right

YOUR PREVIOUS BELIEF SUMMARY:
{belief text from previous step}

LATEST ACTION-OBSERVATION: action={action},
observation={observation}

INSTRUCTION: First write one line starting with
'BELIEF:' that states your updated belief in
natural language, including your best estimate of
the probability the tiger is on the left vs. the
right (e.g., 'BELIEF: ~80% left, ~20% right, based
on ...'). Then output a final line strictly
formatted as 'ACTION: <action>'.
\end{verbatim}
}

\subsection{Red-Team Attack Graph}

\emph{System prompt.}
{\footnotesize
\begin{verbatim}
Select the action that maximizes your expected
discounted return over the remaining steps of the
episode, not just the immediate reward -- scanning
now can be worth a short-term cost if it avoids a
larger expected loss later. Use the information
provided below. Break ties uniformly. Output your
reasoning, then a final line strictly formatted as
'ACTION: <action>'.
\end{verbatim}
}

\emph{Reactive user prompt.}
{\footnotesize
\begin{verbatim}
DOMAIN: K=6 Red-Team Attack Graph
GOAL: Discover vulnerable entry points and execute
successful exploits.
REWARDS: +20 for successful exploit, -5 for failed
exploit, -0.5 for scan, -0.1 for wait.
STEPS REMAINING: {steps_remaining}
AVAILABLE ACTIONS: scan_0, ..., scan_5, exploit_0,
..., exploit_5, wait

{Already compromised nodes, if any.}

LATEST SCAN OBSERVATION: Scanned node {i},
result={0|1} (1=flagged vulnerable, 0=clean).
\end{verbatim}
}

\emph{\BSE\ user prompt.} Identical header/reward/action lines to
the Reactive prompt above, replacing the observation line with:
{\footnotesize
\begin{verbatim}
CURRENT MARGINAL PROBABILITY OF EACH NODE BEING
VULNERABLE:
- Node 0: {p_0:.4f}
...
- Node 5: {p_5:.4f}
\end{verbatim}
}
where $p_i = \sum_{s : s_i = 1} b(s)$ is the per-node marginal of
the full 64-state posterior.

\emph{NL-Tracker user prompt.} Identical header/reward/action
lines, with the previous free-text belief and an instruction
analogous to the Tiger case, adapted to per-node probabilities:
{\footnotesize
\begin{verbatim}
YOUR PREVIOUS BELIEF SUMMARY:
{belief text from previous step}

LATEST SCAN OBSERVATION: {as above}

INSTRUCTION: First write one line starting with
'BELIEF:' that states, for each of the 6 nodes,
your estimated probability it is vulnerable (e.g.
'BELIEF: Node 0: 70%, Node 1: 20%, ...'). Then
output a final line strictly formatted as
'ACTION: <action>'.
\end{verbatim}
}

\section{Additional Experimental Details}
\label{app:experiments}

This appendix records the configuration actually used to produce
the numbers in \cref{sec:results}, as distinct from the full
designed protocol of \cref{sec:methodology}
(\cref{sec:results-scope} explains the gap between the two).

\emph{Model and decoding.} \texttt{gpt-4o} via the OpenAI API,
$\tau = 0.3$ for the main comparisons and ablations \textbf{AB1}/
\textbf{AB2}, $\tau = 1.0$ for \textbf{AB9}. On a malformed or
unparseable completion, the harness retries once at $\tau = 0.0$
before falling back to a fixed default action
(\texttt{listen} for Tiger, \texttt{wait} for the attack graph);
fallback calls are logged and counted in the reported abstention
rate but not excluded from the returns.

\emph{Sample sizes and concurrency.} Main comparisons use $N =
40$ paired episode seeds ($0,\dots,39$) per environment instance;
ablations use $N = 25$ seeds ($0,\dots,24$); a single
LLM-sampling seed per episode in both cases. Episodes are run
concurrently with a thread pool of 5 workers to keep live-API
wall-clock time tractable.

\emph{Bootstrap.} Percentile bootstrap on the mean, 2{,}000
resamples, fixed RNG seed 0. This is fewer than the 10{,}000
resamples specified for the full protocol in
\cref{sec:meth-metrics}, reduced for run time given the smaller
$N$; confidence intervals at $N = 40$ are wide regardless of
resample count, so this reduction has negligible effect on the
reported intervals.

\emph{Decision consistency.} Rather than the 200 pre-selected
belief-equivalent pairs with $K = 32$ resamples per pair specified
in \cref{sec:meth-metrics}, the reported numbers use an
opportunistic, reduced version: up to 10 (Tiger) or 8 (attack
graph) belief-collision groups mined from each method's own
main-run trajectories, resampled $K = 5$ times each. Tiger yielded
zero qualifying groups at $N = 40$ (reported as not measured);
the attack graph yielded 24 pairs.

\emph{Parser.} An exact-match parser scans the completion for a
line beginning \texttt{ACTION:} and accepts it only if the
remainder matches one of the environment's action names verbatim.

\emph{Per-episode compute.} Representative values from the logged
main-comparison runs (\texttt{logs/2026-09-02/}): on Tiger, mean
latency per episode is 4.1\,s (Reactive), 9.4\,s (\BSE), 0.9\,s
(NL-Tracker); on the attack graph, 111.6\,s (Reactive), 140.5\,s
(\BSE), 56.6\,s (NL-Tracker). The larger gap on the attack graph
reflects its longer horizon ($T=30$ vs.\ $T=20$) and larger
average token count per call.

\emph{Not implemented in this evaluation round.}
Chain-of-Thought, ReAct, QMDP, and POMCP baselines; the
open-weights replication; and ablations \textbf{AB3}--\textbf{AB8}
and \textbf{AB10}. No hyperparameters (POMCP simulation count,
UCB1 exploration constant, particle-filter size, etc.) were
therefore tuned or run for this round; the values quoted in
\cref{sec:meth-baselines} describe the intended configuration for
a future full run, not a completed one.

{\footnotesize
\emph{Raw logs.} Per-run JSON summaries and per-call logs are at
\texttt{logs/2026-09-02/\{tiger,attack\}\_\{full,ablations\}\_}
\texttt{eval\_results.json}, produced by the following scripts,
released with the code accompanying this preprint:
\begin{itemize}[leftmargin=*,noitemsep,topsep=2pt]
\item \texttt{tiger\_pomdp/run\_full\_eval.py}
\item \texttt{tiger\_pomdp/run\_ablations\_eval.py}
\item \texttt{red\_team\_graph/run\_full\_eval.py}
\item \texttt{red\_team\_graph/run\_ablations\_eval.py}
\end{itemize}
}

\end{document}